\title{
  Parallel Bayesian Global Optimization of Expensive Functions\thanks{Peter Frazier and Jialei Wang were partially supported by NSF CAREER CMMI-1254298, NSF CMMI-1536895, NSF IIS-1247696, AFOSR FA9550-12-1-0200, AFOSR FA9550-15-1-0038, and AFOSR FA9550-16-1-0046.}
}
\date{}
\author[1]{Jialei Wang\thanks{jw865@cornell.edu}}
\author[2]{Scott C. Clark\thanks{scott@sigopt.com}}
\author[3]{Eric Liu\thanks{eliu@yelp.com}}
\author[1]{Peter I.\ Frazier\thanks{pf98@cornell.edu}}
\affil[1]{School of Operations Research and Information Engineering, Cornell University}
\affil[2]{SigOpt, 244 Kearny St, San Francisco, CA}
\affil[3]{Yelp, Inc., 140 New Montgomery, San Francisco, CA}
\newcommand{\argmax}{\operatornamewithlimits{argmax}}
\newcommand{\argmin}{\operatornamewithlimits{argmin}}
\newcommand{\bx}{\bm{x}}
\newcommand{\x}[1]{\bm{x}^{(#1)}}
\newcommand{\bmu}[1]{\bm{\mu}^{(#1)}}
\newcommand{\y}[1]{y^{(#1)}}
\newcommand{\Sig}[1]{\bm{\Sigma}^{(#1)}} 
\newcommand{\R}{\mathbb{R}}
\newcommand{\E}{\mathbb{E}}
\newcommand{\qEI}{\text{\textit q-EI}}
\newtheorem{theorem}{Theorem}
\newtheorem{corollary}{Corollary}
\newtheorem{lemma}{Lemma}
\begin{document}

\maketitle								

\section*{Abstract}
We consider parallel global optimization of derivative-free expensive-to-evaluate functions, and
propose an efficient method based on stochastic approximation for implementing a conceptual Bayesian
optimization algorithm proposed by \cite{GiLeCa08}. At the heart of this algorithm is maximizing 
the information criterion called the ``multi-points expected improvement'', or the $\qEI$.
To accomplish this, we use infinitessimal
perturbation analysis (IPA) to construct a stochastic gradient estimator and show that this estimator
is unbiased. We also show that the stochastic gradient ascent algorithm using the constructed gradient
estimator converges to a stationary point of the $\qEI$ surface, and therefore, as the number of 
multiple starts of the gradient ascent algorithm and the number of steps for each start grow large, 
the one-step Bayes optimal set of points is recovered.
We show in numerical experiments that our method for maximizing the $\qEI$ is faster 
than methods based on closed-form evaluation using high-dimensional integration, when considering many parallel function evaluations, and is comparable in speed when considering few.
We also show that the resulting one-step Bayes optimal algorithm for parallel global optimization finds high-quality solutions with fewer evaluations than a heuristic based on approximately maximizing the $\qEI$.
A high-quality open source implementation of this algorithm is available in the open source Metrics Optimization Engine (MOE). 

\section{Introduction}
\label{sec:intro}
We consider derivative-free global optimization of expensive functions, in which 
(1) our objective function is time-consuming to evaluate, limiting the number of 
function evaluations we can perform; (2) evaluating the objective function provides 
only the value of the objective, and not the gradient or Hessian; and (3) we seek a global, 
rather than a local, optimum.  Such problems arise when the objective 
function is evaluated by running a complex computer code (see, e.g., \citealt{sacks1989design}), 
performing a laboratory experiment, 
or building a prototype system to be evaluated in the real world. 
In this paper we assume our function evaluations are deterministic, i.e., free from noise.

Bayesian Global Optimization (BGO) methods are one class of methods for solving 
such problems. They were initially proposed by~\cite{kushner1964new}, with 
early work pursued in~\cite{MoTiZi78} and~\cite{mockus1989bayesian}, and more recent work including 
improved algorithms~\citep{boender1987bayesian, JoScWe98, HuAlNoZe06}, convergence analysis~\citep{Ca97,CaZi02, VaBe10}, and allowing noisy function evaluations~\citep{CaZi05,ViVaWa08,frazier2009knowledge, HuAlNoZe06}. 

The most well-known BGO method is Efficient Global Optimization (EGO)
from~\cite{JoScWe98}, which chooses each point at which to evaluate the expensive
objective function in the ``outer'' expensive global optimization problem by
solving an ``inner'' optimization problem: maximize the ``expected improvement''.
Expected improvement is the value of information \citep{Ho66} from a
single function evaluation, and quantifies the benefit that this evaluation provides in terms of revealing a point with a better objective function value than previously known.
If this is the final point that will be evaluated in the outer optimization
problem, and if additional conditions are satisfied (the evaluations are free
from noise, and the implementation decision, i.e., the solution that will be
implemented in practice after the optimization is complete, is restricted to be
a previously evaluated point), then 
the point with 
largest expected improvement is the Bayes-optimal point to evaluate, in the sense of
providing the best possible average-case performance in the outer expensive
global optimization problem \citep{FrazierWang2016}.

Solving EGO's inner optimization problem is facilitated by an
easy-to-compute and differentiate expression for the expected
improvement in terms of the scalar normal cumulative distribution function.
Fast evaluation of the expected improvement and its gradient make it possible
in many applications to solve the inner optimization problem in significantly less time than
the time required per evaluation of the expensive outer objective,
which is critical to EGO's usefulness as an optimization algorithm.

The inner optimization problem at the heart of EGO and its objective, the
expected improvement, was generalized by~\cite{GiLeCa08} to the parallel
setting, in which the expensive objective can be evaluated at several 
points simultaneously.  This generalization, called the ``multi-points expected improvement'' or the $\qEI$,
is consistent with the decision-theoretic derivation of expected improvement and 
quantifies the expected utility that will result from the evaluation of a {\textit set} of points.
(Here, the increase in utility is the improvement in the objective value.)
This work also provided an analytical formula for $q=2$.

If this generalized inner optimization problem, 
which is to find the set of points to evaluate next that jointly maximize the $\qEI$,
could be solved efficiently, then this would provide the one-step Bayes-optimal set 
of points to evaluate in the outer problem, 
and would create a one-step Bayes-optimal algorithm for global optimization of expensive functions 
able to fully utilize parallelism.

This generalized inner optimization problem is challenging, however, because unlike the scalar expected improvement used by EGO,
the $\qEI$ lacks an easy-to-compute and differentiate expression, and is calculable only through Monte Carlo simulation, 
high-dimensional numerical integration, or expressions involving high-dimensional multivariate normal cumulative distribution functions (CDFs).
This significantly restricts the set of applications in which a naive implementation can solve the inner problem faster than a single evaluation of the outer optimization problem.
Stymied by this difficulty,~\cite{GiLeCa08} and later work~\citep{chevalier2013fast}, 
propose heuristic methods that are {\textit motivated} by the one-step optimal algorithm 
of evaluating the set of points that jointly maximize the $\qEI$, but that do not 
actually achieve this gold standard.

\paragraph{Contributions}

The main contribution of this work is to provide a method that solves the inner optimization 
problem of maximizing the $\qEI$ efficiently, creating a practical and broadly applicable one-step Bayes-optimal algorithm for parallel 
global optimization  of expensive functions. To accomplish this we use infinitesimal
perturbation analysis (IPA) \citep{Ho1987} to construct a stochastic gradient
estimator of the gradient of the $\qEI$ surface, and show that this estimator is
unbiased, with a bounded second moment.  Our method uses this estimator within a stochastic gradient ascent
algorithm, which we show converges to the set of stationary points of the $\qEI$ surface.
We use multiple restarts to identify multiple stationary
points, and then select the best stationary
point found.  As the number of restarts and the number of iterations of
stochastic gradient ascent within each restart both grow large, the one-step
optimal set of points to evaluate is recovered. 

Our method can be implemented in both synchronous environments, in which function evaluations are performed in batches and finish at the same time, and asynchronous ones, in which a function evaluation may finish before others are done.

In addition to our methodological contribution, we have 
developed a high-quality open source software package, the ``Metrics Optimization Engine (MOE)'' \citep{moe-github2015},
implementing our method for solving the inner optimization problem and the 
resulting algorithm for parallel global optimization of expensive functions. 
To further enhance computational speed, the implementation takes advantage of
parallel computing and achieves 100X speedup over single-threaded computation when deployed on a graphical processing unit (GPU).
This software package has been used by Yelp and Netflix to solve global optimization problems arising in their businesses \citep{YelpBlog2014,NetflixMLConf2014}.
For the rest of the paper, we refer to our method as \enquote{MOE-qEI} because it is  implemented in MOE.

We compare MOE-qEI against several benchmark methods.  We show that MOE-qEI provides high-quality solutions to the outer optimization problem using fewer function evaluations than the heuristic CL-mix policy proposed by~\cite{chevalier2013fast}, which is motivated by the inner optimization problem.  We also show that MOE-qEI provides a substantial parallel speedup over the single-threaded EGO algorithm, which is one-step optimal when parallel resources are unavailable.
We also compare our simulation-based method for solving the inner optimization problem against methods based on exact evaluation of the $\qEI$ from~\cite{chevalier2013fast} and~\cite{marmin2015differentiating} (discussed in more detail below) and show that our simulation-based approach to solving the inner optimization problem provides solutions to both the inner and outer optimization problem that are comparable in quality and speed when $q$ is small, and superior when $q$ is large.

\paragraph{Related Work}
Developed independently and in parallel with our work is~\cite{chevalier2013fast}, which provides a closed-form formula for computing $\qEI$, and the book chapter~\cite{marmin2015differentiating}, which provides a closed-form expression for its gradient.  Both require multiple calls to high-dimensional multivariate normal CDFs.  These expressions can be used within an existing continuous optimization algorithm to solve the inner optimization problem that we consider.

While attractive in that they provide closed-form expressions, calculating these expressions when $q$ is even moderately large is slow and numerically challenging.  This is because calculating the multivariate normal CDF in moderately large dimension is itself challenging, with state of the art methods relying on numerical integration or Monte Carlo sampling as described in~\cite{genz1992numerical}. Indeed, the method for evaluating the $\qEI$ from~\cite{chevalier2013fast} requires $q^2$ evaluations of the $q-1$ dimensional multivariate normal CDF, and the method for evaluating its gradient requires $O(q^4)$ calls to multivariate normal CDFs with dimension ranging from $q-3$ to $q$. 
In our numerical experiments, we demonstrate that our method for solving the inner optimization problem requires less computation time and parallelizes more easily than do these competing methods for $q>4$, and performs comparably when $q$ is smaller.
We also demonstrate that MOE-qEI's improved performance in the inner optimization problem for $q>4$ translates to improved performance in the outer optimization problem.

Other related work includes the previously proposed heuristic CL-mix from~\cite{chevalier2013fast}, which does not solve the inner maximization of $\qEI$, instead using an approximation. While solving the inner maximization of $\qEI$ as we do makes it more expensive to compute the set of points to evaluate next, we show in our numerical experiments that it results in a substantial savings in the number of evaluations required to find a point with a desired quality.  When function evaluations are expensive, this results in a substantial reduction in overall time to reach an approximately optimal solution.

In other related work on parallel Bayesian optimization,
\cite{frazier2011value} and~\cite{xie2013bayesian} proposed a Bayesian optimization algorithm that 
evaluates pairs of points in parallel, and is one-step Bayes-optimal in the noisy setting under the assumption that one can only observe noisy function values for single points, or noisy function value differences between pairs of points.  This algorithm, however, is limited to evaluating  pairs of points, and does not extend to a higher level of parallelism. 

There are also other non-Bayesian algorithms for derivative-free global optimization of expensive functions with parallel function evaluations 
from~\cite{dennis1991direct, kennedy2010particle} and~\cite{john1992adaptation}.
These are quite different in spirit from the algorithm we develop, not being derived from a decision-theoretic foundation.



\paragraph{Outline}
We begin in Section~\ref{sec:problem} by describing the mathematical setting 
in which Bayesian global optimization is performed, and then defining the $\qEI$ and the 
one-step optimal algorithm.  
We construct our stochastic gradient estimator in Section~\ref{sec:gradient_estimator}, 
and use it within stochastic gradient ascent to define a one-step optimal method 
for parallel Bayesian global optimization in Section~\ref{sec:optimization_qEI}.
Then in Section~\ref{sec:unbiasedness} we show that the constructed gradient estimator of the $\qEI$ surface 
is unbiased under mild regularity conditions, and in Section~\ref{sec:SGA_converge} we provide 
convergence analysis of the stochastic gradient ascent algorithm.
Finally, in Section~\ref{sec:numerical} we present numerical experiments: 
we compare MOE-qEI against previously proposed 
heuristics from the literature; we demonstrate that MOE-qEI provides a speedup 
over single-threaded EGO; we show that MOE-qEI is more efficient than 
optimizing evaluations of the $\qEI$ using closed-form formula provided in~\cite{chevalier2013fast} when $q$ is large;
and we show that MOE-qEI computes the gradient of $\qEI$ faster than evaluating the closed-form expression
proposed in~\cite{marmin2015differentiating}.



\section{Problem formulation and background} 
\label{sec:problem}
In this section, we describe a decision-theoretic approach to Bayesian global
optimization in parallel computing environments, previously proposed by~\cite{GiLeCa08}.  
This approach was considered to be purely conceptual 
as it contains a difficult-to-solve optimization sub-problem (our so-called ``inner'' optimization problem).
In this section, we present this inner optimization problem as background, and present a novel method in the subsequent section that solves it efficiently.

\subsection{Bayesian Global Optimization}
Bayesian global optimization considers optimization of a function $f$ with domain 
$\mathbb{A} \subseteq \mathbb{R}^{d}$.  The overarching goal is to find an approximate solution to 
\begin{equation*}
\min_{\bm{x} \in \mathbb{A}} f(\bm{x}).
\end{equation*}

We suppose that evaluating $f$ is expensive or time-consuming, and that these evaluations 
provide only the value of $f$ at the evaluated point and not its gradient or Hessian. 
We assume that the function defining the domain $\mathbb{A}$ is easy to evaluate and that 
projections from $\mathbb{R}^d$ into the nearest point in $\mathbb{A}$ can be performed quickly.

Rather than focusing on asymptotic performance as the number of function evaluations grows large, 
we wish to find an algorithm that performs well, on average, given a limited budget of function 
evaluations.  To formalize this, we model our prior beliefs on the function $f$ with a Bayesian 
prior distribution, and we suppose that $f$ was drawn at random by nature from this prior 
distribution, before any evaluations were performed.  We then seek to develop an optimization 
algorithm that will perform well, on average, when applied to a function drawn at random in this way.

\subsection{Gaussian process priors}    \label{sec:gp_model}
For our Bayesian prior distribution on $f$, we adopt a Gaussian process prior (see~\citealt{RaWi06}), 
which is specified by its mean function $\mu(\bm{x}) : \mathbb A \rightarrow \mathbb{R}$ 
and positive semi-definite covariance function $k(\bm{x}, \bm{x}') : \mathbb A \times \mathbb A \rightarrow \mathbb{R}$. 
We write the Gaussian process as 
\begin{equation*}
  f \sim \mathcal{GP}(\mu, k).
\end{equation*}
Then for a collection of points $\bm{X} := (\bx_1,\ldots,\bx_q)$, the prior of $f$ at 
$\bm{X}$ is
\begin{equation}
  f(\bm{X}) \sim \mathcal{N} (\bmu{0}, \Sig{0}),
  \label{eq:prior}
\end{equation}
where $\bmu{0}_i = \mu(\bx_i)$ and $\Sig{0}_{ij} = k(\bx_i, \bx_j), i,j \in \{1,\ldots, q\}$. 

Our proposed method for choosing the points to evaluate next additionally requires
that $\mu$ and $k$ satisfy some mild regularity assumptions discussed below, but otherwise
adds no additional requirements.
In practice, $\mu$ and $k$ are
typically chosen using an empirical Bayes approach discussed in~\cite{brochu2010bayesian}, in which
first, a parameterized functional form for $\mu$ and $k$ is assumed; 
second, a first stage of data is collected in which $f$ is evaluated at points chosen 
according to a Latin hypercube or uniform design; 
and third, maximum likelihood estimates for the parameters specifying $m$ and $k$ are obtained.  
In some implementations~\citep{JoScWe98, snoek2012practical}, these estimates are updated iteratively as more evaluations of
$f$ are obtained, which provides more accurate inference and tends to reduce the number of function evaluations required to find good solutions but increases the computational overhead per evaluation.
We adopt this method below in our numerical experiments in
Section~\ref{sec:numerical}.
However, the specific contribution of this paper, a new method for solving an
optimization sub-problem arising in the choice of design points, works with any
choice of mean function $\mu$ and covariance matrix $k$, as long as they satisfy
the mild regularity conditions discussed below.

%




In addition to the prior distribution specified in \eqref{eq:prior}, we may also have some
previously observed function values $\y{i}=f(\x{i})$, for $i=1,\ldots,n$.
These might have been obtained through the previously mentioned first stage of
sampling, running the second stage sampling method we are about to describe,
or from some additional runs of the expensive objective function $f$ performed
by another party outside of the control of our algorithm.  If no additional
function values are available, we set $n=0$.
We define notation $\x{1:n} = (\x{1},\ldots,\x{n})$ and $\y{1:n} = (\y{1},\ldots,\y{n})$.
We require that all points in $\x{1:n}$ be distinct.

We then combine these previously observed function values with our prior to
obtain a posterior distribution on $f(\bm{X})$.  This posterior distribution is still a
multivariate normal (e.g., see Eq.~(A.6) on pp.~200 in~\citealt{RaWi06})
\begin{equation}
    f(\bm{X}) \mid \bm{X}, \x{1:n}, \y{1:n} \sim \mathcal{N} (\bmu{n}, \Sig{n}),
\label{eq:posterior}
\end{equation}
with
\begin{equation}
\begin{split}
&\bmu{n} = \bmu{0} + K\left(\bm{X},\x{1:n}\right) K\left(\x{1:n},\x{1:n}\right)^{-1}  \left(\y{1:n} - \mu(\x{1:n}) \right), \\
&\Sig{n} = K\left(\bm{X},\bm{X}\right) - K\left(\bm{X},\x{1:n}\right)  K\left(\x{1:n},\x{1:n}\right)^{-1} K\left(\x{1:n},\bm{X}\right),
\end{split}
\label{eq:posterior_detail}
\end{equation}
where $\mu(\x{1:n})$ is the vector obtained by evaluating the prior mean function at each point in $\x{1:n}$, $K\left(\bm{X}, \bm{x}^{(1:n)}\right)$ is a $q \times n$ matrix with 
$K\left(\bm{X}, \bm{x}^{(1:n)}\right)_{ij} = k(\bm{x}_i, \bm{x}^{(j)})$, and similarly for 
$K\left(\bm{x}^{(1:n)}, \bm{X}\right)$, $K\left(\bm{X}, \bm{X}\right)$ and $K\left(\bm{x}^{(1:n)}, \bm{x}^{(1:n)}\right)$.


\subsection{Multi-points expected improvement ($\qEI$)}
In a parallel computing environment, we wish to use this posterior distribution
to choose the set of points to evaluate next. \cite{GiLeCa08}
proposed making this choice using a decision-theoretic approach that
considers the utility provided by evaluating a particular candidate set of
points in terms of their ability to reveal better solutions 
than previously known.  We review this decision-theoretic approach here, and then present a new algorithm for implementing this choice in the next section. 

Let $q$ be the number of function evaluations we will perform in parallel,
and let $\bm{X}$ be a candidate 
set of points that we are considering evaluating next. 
Let $f_{n}^{\star} = \min_{m \leq n} f(\x{m})$ 
indicate the value of the best point evaluated, before beginning these $q$ 
new function evaluations. The value of the best point evaluated after all $q$ 
function evaluations are complete will be $\min\left(f_n^{\star},\min_{i = 1,\ldots,q} f(\bm{x}_i )\right)$.
The difference between these two values (the values of the best point evaluated, 
before and after these $q$ new function evaluations) is called the {\textit improvement}, 
and is equal to $\left( f_{n}^{\star} - \min_{i = 1,\ldots,q} f(\bm{x}_i ) \right)^{+}$, 
where $a^+ = \max(a,0)$ for $a\in\R$.

We then compute the expectation of this improvement over the joint probability distribution over $f(\bm{x}_i), i=1,\ldots,q$,
and we refer to this quantity as the 
{\textit multi-points expected improvement} or $\qEI$ from~\cite{GiLeCa08}.  This multi-points 
expected improvement can be written as,
\begin{equation}
\qEI (\bm{X} )=\mathbb{E}_n\left[\left( f_{n}^{\star} - \min_{i = 1,\ldots,q} f(\bm{x}_i ) \right)^{+} \right],
\label{eq:multiEI}
\end{equation}
where $\mathbb{E}_n \left[ \cdot \right] := \mathbb{E} \left[\cdot |\x{1:n}, \y{1:n} \right]$ 
is the expectation taken with respect to the posterior distribution.

\cite{GiLeCa08} then proposes evaluating next the set of 
points that maximize the multi-points expected improvement, 
\begin{equation} 
  \argmax_{\bm{X} \in H} \qEI (\bm{X}),
  \label{eq:maxEI}
\end{equation}
where $H = \{(\bm{x}_1, \ldots, \bm{x}_q) : \bm{x}_i \in \mathbb{A}, \lvert\lvert \bm{x}_i - \bm{x}_j \rvert\rvert \geq r, \lvert\lvert \bm{x}_i - \bm{x}^{(\ell)} \rvert\rvert \geq r, i \neq j, 1 \leq i,j \leq q,  1 \leq \ell \leq n\}$.

This formulation generalizes \cite{GiLeCa08} slightly by allowing an optional requirement that new evaluation points be a distance of at least $r\ge0$ from each other and previously evaluated points.
\cite{GiLeCa08} implicitly set $r=0$.
  Our convergence proof requires $r>0$, which provides a compact feasible domain over which the stochastic gradient estimator has bounded variance.  Setting a strictly positive $r$ can also improve numerical stability in inference (see, e.g., \citealt{ababou1994condition}), and evaluating a point extremely close to a previously evaluated point is typically unlikely to provide substantial improvement in the revealed objective value.
In our experiments we set $r=10^{-5}$.

In the special case $q=1$, which occurs when we are operating without parallelism, 
the multi-points expected improvement reduces to the expected improvement 
\citep{mockus1989bayesian,JoScWe98}, which can be evaluated in closed-form in terms 
of the normal density and CDF as discussed in Section~\ref{sec:intro}. 
\cite{GiLeCa08} provided an analytical expression for $\qEI$ when $q=2$, but in the same paper the authors
commented that computing $\qEI$ for $q>2$ involves 
expensive-to-compute $q$-dimensional Gaussian cumulative distribution functions relying on multivariate integral approximation, which makes solving \eqref{eq:maxEI} difficult.  
\cite{Ginsbourger2009} writes \enquote{directly optimizing the $\qEI$ becomes extremely expensive 
as $q$ and $d$ (the dimension of inputs) grow.}

\section{Algorithm} \label{sec:algorithm}
In this section we present a new algorithm for solving the inner optimization problem  \eqref{eq:maxEI} of maximizing $\qEI$. 
This algorithm uses a novel estimator of the gradient of the $\qEI$ presented in Section~\ref{sec:gradient_estimator},
used within a multistart stochastic gradient ascent framework as described in  Section~\ref{sec:optimization_qEI}.
We additionally generalize this technique from synchronous to asynchronous parallel optimization in Section~\ref{sec:async}. 
Section~\ref{sec:notation} begins by introducing additional notation used to describe our algorithm.

While we keep $q$ fixed (typically to the maximum level of parallelism in one's computing environment), there are settings where one may wish to choose it in a more refined way: 
if parallelism can be flexibly purchased in a cloud computing environment; or if parallel resources can be devoted to parallelizing each function evaluation in addition to running multiple function evaluations in parallel. The Appendix briefly discusses choosing $q$ in these settings.

\subsection{Notation}
\label{sec:notation}
In this section we define additional notation to better support construction of the gradient estimator.
Justified by \eqref{eq:posterior}, we 
write $f(\bm{X})$ as
\begin{equation} 
f(\bm{X}) \,{\buildrel d \over =}\, \bm{\mu}(\bm{X}) +\bm{L}(\bm{X}) \bm{Z},
\label{eq:Y2}
\end{equation}
where $\bm{L}(\bm{X})$ is the lower triangular matrix obtained from the Cholesky decomposition of $\Sig{n}$
in \eqref{eq:posterior}, $\bm{\mu}(\bm{X})$ is the posterior mean (identical to $\bmu{n}$ in \eqref{eq:posterior}, but rewritten here to emphasize the dependence on $X$ and de-emphasize the dependence on $n$),
and $\bm{Z}$ is a multivariate standard normal random vector. 
We will also use the notation $\bm{\Sigma}(\bm{X})$ in place of $\bm{\Sigma}^{(n)}$ in our analysis.

By substituting \eqref{eq:Y2} into \eqref{eq:multiEI}, we have
\begin{equation}
  \qEI (\bm{X}) =\mathbb{E} \left[\left( f_n^* - \min_{i=1,\ldots,q} \bm{e}_i \left[ \bm{\mu}(\bm{X})+\bm{L}(\bm{X}) \bm{Z} \right] \right)^+ \right],
  \label{eq:EI_n1}
\end{equation}
where $\bm{e}_i$ is a unit vector in direction $i$ and the expectation is over $\bm{Z}$. To make \eqref{eq:EI_n1} even more compact, define a new vector $\bm{m}(\bm{X})$
and new matrix $\bm{C}(\bm{X})$, 
\begin{equation}
\begin{split}
  \bm{m}(\bm{X})_i&=
\begin{dcases}
  f_n^* - \bm{\mu}(\bm{X})_i  & \text{if $i>0$ ,} \\
0 & \text{if $i=0$ ,}
\end{dcases}
\\
\bm{C}(\bm{X})_{ij} &=\begin{dcases}
-\bm{L}(\bm{X})_{ij} & \text{if $i>0$ ,} \\
0       & \text{if $i=0$ ,}
\end{dcases}
\end{split}
\label{eq:multiEI_detail}
\end{equation}
and \eqref{eq:EI_n1} becomes
\begin{equation} \label{eq:qEI}
\qEI(\bm{X}) = \mathbb{E} \left[ \max_{i=0,\ldots,q} \bm{e}_i \left[ \bm{m}(\bm{X}) +\bm{C}(\bm{X}) \bm{Z} \right] \right].
\end{equation}

\subsection{Constructing the gradient estimator} \label{sec:gradient_estimator}
We now construct our estimator of the gradient $\nabla \qEI(\bm{X})$.
Let 
\begin{equation} \label{eq:def_f}
h(\bm{X}, \bm{Z}) = \max_{i=0,\ldots,q} \bm{e}_i \left[ \bm{m}(\bm{X}) +\bm{C}(\bm{X}) \bm{Z} \right].
\end{equation}
Then
\begin{equation}
\nabla \qEI(\bm{X}) = \nabla \mathbb{E} h(\bm{X}, \bm{Z}).
\label{eq:grad_qei}
\end{equation}
If gradient and expectation in \eqref{eq:grad_qei} are interchangeable,
the gradient would be
\begin{equation}
\nabla \qEI(\bm{X}) = \mathbb{E}  \bm{g}(\bm{X}, \bm{Z}),
\label{eq:unbiasedness}
\end{equation}
where
\begin{equation} \label{eq:gradient_def}
\bm{g}(\bm{X}, \bm{Z})= \begin{cases} \nabla h(\bm{X},\bm{Z})  & \text{if } \nabla h(\bm{X}, \bm{Z})  \text{ exists,} \\ 0 & \text{otherwise.} \end{cases}
\end{equation}
$\bm{g}(\bm{X}, \bm{Z})$ can be computed using results on differentiation of the Cholesky decomposition from~\cite{smith1995differentiation}.

We use $\bm{g}(\bm{X}, \bm{Z})$ as our estimator of the gradient $\nabla \qEI$, and will
discuss interchangeability of gradient and expectation, which implies unbiasedness of our gradient estimator, in Section~\ref{sec:unbiasedness}.
As will be discussed in Section~\ref{sec:SGA_converge}, unbiasedness of the gradient estimator is one of the sufficient conditions for convergence
of the stochastic gradient ascent algorithm proposed in Section~\ref{sec:optimization_qEI}.

\subsection{Optimization of $\qEI$} \label{sec:optimization_qEI}
Our stochastic gradient ascent algorithm begins with some
initial point $\bm{X}_0 \in H$,
and generates a sequence 
$\{\bm{X}_t: t=1, 2, \ldots\}$ using
\begin{equation}
  \bm{X}_{t+1} = \prod_{H} \left[ \bm{X}_{t} + \epsilon_t \bm{G}(\bm{X}_t) \right],
  \label{eq:sga}
\end{equation}
where $\prod_{H}(\bm{X})$ denotes the closest point in $H$ to $\bm{X}$,
and if the closest point is not unique, a closest point such that the function $\prod_{H}(\cdot)$ is measurable. 
$\bm{G}(\bm{X}_t)$ is an estimate of the gradient of $\qEI(\cdot)$ at $\bm{X}_t$, 
obtained by averaging 
$M$ replicates of our stochastic gradient estimator,
\begin{equation}
  \bm{G}(\bm{X}_t) = \frac{1}{M} \sum_{m=1}^M \bm{g}(\bm{X}_t, \bm{Z}_{t,m}),
  \label{eq:G_eval}
\end{equation}
where \{$\bm{Z}_{t,m}$: m=1, \dots, M\} are i.i.d. samples generated from the multivariate standard normal distribution,
$\bm{g}(\bm{X}_t, \bm{Z}_{t,m})$ is defined in \eqref{eq:gradient_def}.
$\{\epsilon_t: t=0, 1, \ldots\}$ is a stochastic gradient stepsize sequence 
\citep{kushner2003stochastic}, typically chosen to be equal to $\epsilon_t = \frac{a}{t^{\gamma}}$
for some scalar $a$ and $\gamma\in(0,1]$.  Because we use Polyak-Ruppert averaging as described below, we set $\gamma<1$.
Analysis in Section~\ref{sec:theory} shows that, under certain mild conditions, 
this stochastic gradient algorithm 
converges almost surely to the set of stationary points. 

After running $T$ iterations of stochastic gradient ascent using \eqref{eq:sga}, we obtain the sequence $\{\bm{X}_t : t=1,2,\ldots,T\}$.
From this sequence we extract the average $\overline{X}_T = \frac1{T+1} \sum_{t=0}^T \bm{X}_t$ and use it as an estimated stationary point.   This Polyak-Ruppert averaging approach \citep{polyak1990new,ruppert1988efficient} is more robust to misspecification of the stepsize sequence than using $\bm{X}_T$ directly.

To find the global maximum of the $\qEI$, we use multiple restarts of the 
algorithm from a set of starting points, drawn from a Latin
hypercube design \citep{mckay2000comparison}, to find multiple stationary points,
and then use simulation
to evaluate $\qEI$ at these stationary points and select the point for which it is largest.
For simplicity we present our approach using a fixed sample size $N$ to perform this evaluation and selection (Step~\ref{step:estimate} in Algorithm~\ref{algo1} below) 
but one one could also use a more sophisticated ranking and selection algorithm with adaptive sample sizes (see, e.g., \citealt{kim2007recent}), or evaluate $\qEI$ using the closed-form formula in \cite{chevalier2013fast}.
We summarize our procedure for selecting the set of points to sample next, which we call MOE-qEI, in Algorithm~\ref{algo1}.

\begin{algorithm}
\caption{MOE-qEI: Optimization of $\qEI$}\label{algo1}
\begin{algorithmic}[1]
\REQUIRE number of starting points $R$; stepsize constants $a$
and $\gamma$; number of steps for one run of gradient ascent $T$;  
number of Monte Carlo samples for estimating the gradient $M$; number of 
Monte Carlo samples for estimating $\qEI$ $N$.
\STATE Draw $R$ starting points from a Latin hypercube design in $H$, $\bm{X}_{r,0}$ for $r=1,\ldots,R$ .
\FOR{$r=1$ to $R$} 
\FOR{$t=0$ to $T-1$}
\STATE Compute $\bm{G}_t = \frac{1}{M} \sum_{m=1}^M \bm{g}(\bm{X}_{r,t},\bm{Z}_{r,t,m})$ 
where $\bm{Z}_{r,t,m}$ is a vector of $q$ i.i.d. samples drawn from the standard normal distribution.
\STATE Update solution using stochastic gradient ascent $\bm{X}_{r,t+1} = \prod_{H} \left[ \bm{X}_{r,t} + \frac{a}{t^{\gamma}} \bm{G}_t \right]$.
\ENDFOR
\label{step:polyak-ruppert}
\STATE Compute the simple average of the solutions for $\bm{X}_{r,t}$, $\overline{\bm{X}}_{r,T} = \frac{1}{T+1} \sum_{t=0}^T\bm{X}_{r, t}$. \label{step:done_start}
\STATE Estimate $\qEI (\overline{\bm{X}}_{r,T})$ using Monte Carlo simulation with $N$ i.i.d. samples, and store the estimate as $\widehat{\qEI}_r$.  \label{step:estimate}
\ENDFOR
\RETURN $\overline{\bm{X}}_{r',T}$ where $r' = \argmax_{r=1,\ldots,R} \widehat{\qEI_r}$.
\end{algorithmic}
\end{algorithm}

The MOE software package \citep{moe-github2015} implements Algorithm~\ref{algo1},
and supplies the following additional optional fallback logic.  
If $\max_{r=1,\ldots,R} \widehat{\qEI}_r \le \epsilon'$, 
so that multistart stochastic gradient ascent fails to find a point with estimated 
expected improvement better than $\epsilon'$, then it generates $L$ additional 
solutions from a Latin Hypercube on $H$, estimates the $\qEI$
at each of these using the same Monte Carlo approach as in Step~\ref{step:estimate}, 
and selects the one with the largest estimated $\qEI$.
This logic takes two additional parameters: a strictly positive real number 
$\epsilon'$ and an integer $L$.  
We turn this logic off in our experiments by setting $\epsilon'=0$.

\subsection{Asynchronous parallel optimization} \label{sec:async}
So far we have assumed synchronous parallel optimization, in which we wait for all $q$ points to finish before choosing a new set of points.
However, in some applications, we may wish to generate a new partial batch of points to evaluate next while $p$ points are still being evaluated, before we have 
their values. This is common in expensive computer evaluations, which do not necessarily finish at the same time.

We can extend Algorithm~\ref{algo1} to solve an extension of 
\eqref{eq:maxEI}  proposed by \cite{ginsbourger2010kriging} for 
asynchronous parallel optimization: suppose parallelization
allows a batch of $q$ points to be evaluated simultaneously; the first $p$ points are
still under evaluation, while the remaining $q-p$ points have finished evaluation and the resources used to evaluate them are free to evaluate new points. We let $\bm{X'} := (\bm{x}_1, \ldots, \bm{x}_p)$ be the first
$p$ points still under evaluation, and let $\bm{X}:=(\bm{x}_{p+1}, \ldots, \bm{x}_q)$
be the $(q-p)$ points ready for new evaluations. Computation of $\qEI$ for these $q$ points 
remains the same as in \eqref{eq:multiEI},
but we use an alternative notation, 
$\qEI(\bm{X'}, \bm{X})$, to explicitly indicate that $\bm{X}'$ are the points still being evaluated and $\bm{X}$ are the new points to evaluate. 
We emphasize that the expectation in $\qEI(\bm{X'},\bm{X})$ is over yet-to-be-observed values of $f$ at both $\mathbf{X}$ and $\mathbf{X'}$, and is taken with respect to the posterior given the previous $n$ evaluations.
Keeping $\bm{X}'$ fixed, we optimize $\qEI$ over $\bm{X}$ by
solving this alternative problem
\begin{equation} \label{eq:qpEI}
\argmax_{\bm{X} \in H'} \qEI (\bm{X}', \bm{X}),
\end{equation}
where $H' = \{(\bm{x}_{p+1}, \ldots, \bm{x}_{q}): \bm{x}_i \in \mathbb{A}, \lvert\lvert \bm{x}_i - \bm{x}_j \rvert\rvert \geq r, \lvert\lvert \bm{x}_i - \bm{x}_k \rvert\rvert \geq r, \lvert\lvert \bm{x}_i - \bm{x}^{(m)} \rvert\rvert \geq r, i \neq j, p < i \leq q, p < j \leq q, 1 \leq k \leq p, 1 \leq m \leq n \}$ for some small positive $r$.
As we did in the algorithm for synchronous parallel optimization in Section~\ref{sec:optimization_qEI},
we estimate the gradient of the objective function with respect to $\bm{X}$, i.e., $\nabla_{\bm{X}} \qEI (\bm{X}', \bm{X})$. 
The gradient estimator is essentially the same as that
in Section~\ref{sec:gradient_estimator}, except that we only
differentiate $h(\cdot, \cdot)$ with respect to $\bm{X}$. 
(Although $h(\cdot,\cdot)$ is only differentiated with respect to $\mathbf{X}$, it depends on both $\mathbf{X}$ and $\mathbf{X'}$.)
Then we proceed according to Algorithm \ref{algo1}.

In practice, one typically sets $p=q-1$.
This is because 
Bayesian optimization procedures are used most frequently when function evaluation times are large, 
and asynchronous computing environments 
typically have a time between evaluation completions that increases with the evaluation time.
When this inter-completion time is large relative to the time required to solve \eqref{eq:qpEI}, 
it is typically better to solve~\eqref{eq:qpEI} each time an evaluation completes, i.e., to set $p=q-1$.

Indeed, if we set $p < q-1$ then we let CPU cores sit idle for longer and we decrease the total utilization of our parallel computing environment.  For example, consider the CPU core that finishes first.  When $p = q-1$, it waits to start a new function evaluation only for the time it takes to maximize \qEI.  Moreover, this core can be used to do this maximization so that no time is wasted.  But, if $p<q-1$, it must sit idle while we wait for an additional $q-1-p$ cores to complete.  By not letting cores sit idle, we reduce the wall-clock time required to do a given number of function evaluations.

If the time to perform a function evaluation is small enough, or if the computing environment is especially homogeneous, then the time between completions might be substantially smaller than the time to solve~\eqref{eq:qpEI} and one might wish to set $p$ strictly smaller than $q-1$. This may be beneficial because short intercompletion times keep the resulting increase in idle time small and 
it allows results from a larger group of function evaluations to be available when deciding how to next allocate cores.

\section{Theoretical analysis} \label{sec:theory}
In Section~\ref{sec:algorithm},
In Section~\ref{sec:algorithm},
when we constructed our gradient estimator and described the use of stochastic gradient ascent to optimize
$\qEI$, we alluded to conditions under which this gradient estimator is unbiased and this stochastic gradient ascent algorithm converges to the set of stationary points of the $\qEI$ surface.
In this section, we describe these conditions and state these results.

\subsection{Unbiasedness of the gradient estimator} \label{sec:unbiasedness}

We now state our main theorem showing unbiasedness of the gradient estimator. Proofs of all results including supporting lemmas are available as supplemental material.

\begin{theorem} \label{thm_grad}
  If $\bm{m}(\bm{X})$ 
  and $\bm{C}(\bm{X})$ are continuously differentiable 
  in a neighborhood of $\bm{X}$, and $\bm{C}(\bm{X})$ 
  has no duplicate rows, then $\nabla h(\bm{X},\bm{Z})$ 
  exists almost surely and
  \begin{equation*}
    \nabla \mathbb{E} h(\bm{X},\bm{Z}) = \mathbb{E} \nabla h(\bm{X},\bm{Z}).
  \end{equation*} 
\end{theorem}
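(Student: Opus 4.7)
The plan is a standard two-step infinitesimal perturbation analysis (IPA) argument. First I would show that, for each fixed $\bm{X}$ in the assumed neighborhood, $\nabla_{\bm{X}} h(\bm{X},\bm{Z})$ exists for almost every $\bm{Z}$. Then I would justify the interchange $\nabla \E h(\bm{X},\bm{Z}) = \E \nabla h(\bm{X},\bm{Z})$ via the dominated convergence theorem (DCT) applied to the finite-difference quotients along each coordinate direction.

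For almost-sure differentiability, observe that $h(\bm{X},\bm{Z}) = \max_{0 \le i \le q} f_i(\bm{X},\bm{Z})$ where $f_i(\bm{X},\bm{Z}) := \bm{m}(\bm{X})_i + \bm{C}(\bm{X})_{i,:}\bm{Z}$ is, by assumption, $C^1$ in $\bm{X}$. A standard envelope argument gives that $h$ is differentiable in $\bm{X}$ at any point where the argmax in $i$ is attained uniquely. A tie between two indices $i \neq j$ requires the linear constraint $(\bm{C}(\bm{X})_{i,:} - \bm{C}(\bm{X})_{j,:})\bm{Z} = \bm{m}(\bm{X})_j - \bm{m}(\bm{X})_i$ on $\bm{Z}$. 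Because $\bm{C}(\bm{X})$ has no duplicate rows, the coefficient vector is nonzero, so the tie set is an affine hyperplane in $\bm{Z}$-space and therefore has Gaussian measure zero. Taking the union over the finitely many pairs $(i,j)$ still yields a measure-zero set, establishing that $\nabla h(\bm{X},\bm{Z})$ exists almost surely.

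For the interchange, I would bound the difference quotient $\delta^{-1}[h(\bm{X} + \delta \bm{e}_k, \bm{Z}) - h(\bm{X}, \bm{Z})]$ uniformly in small $\delta$ by an integrable random variable. Using the elementary inequality $|\max_i a_i - \max_i b_i| \le \max_i |a_i - b_i|$, it suffices to bound each $\delta^{-1}|f_i(\bm{X} + \delta \bm{e}_k, \bm{Z}) - f_i(\bm{X}, \bm{Z})|$. Continuous differentiability of $\bm{m}$ and $\bm{C}$ on a closed ball inside the assumed neighborhood gives uniform bounds on their gradients there, so the mean-value theorem yields a local Lipschitz constant of the form $K_1 + K_2 \|\bm{Z}\|$, valid for all small $\delta$ and all $i$. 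Since $\E[K_1 + K_2\|\bm{Z}\|] < \infty$ for a Gaussian $\bm{Z}$, the dominating function is integrable.

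Combining the almost-sure convergence of the difference quotient to $\partial_{X_k} h(\bm{X},\bm{Z})$ (from step one) with the integrable dominating function (from step two) and invoking DCT gives $\partial_{X_k} \E h(\bm{X},\bm{Z}) = \E \partial_{X_k} h(\bm{X},\bm{Z})$ for every coordinate $k$, which stacked over coordinates yields the claimed identity. The main technical subtlety is making the Lipschitz bound hold uniformly over a $\delta$-independent neighborhood of $\bm{X}$; this is exactly where the $C^1$ hypothesis on $\bm{m}$ and $\bm{C}$ (as opposed to mere differentiability at a single point) gets used, since it lets me pass to a closed sub-ball and extract uniform bounds on the Jacobians of $\bm{m}$ and $\bm{C}$.
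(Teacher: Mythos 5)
Your proof is correct, and it reaches the interchange by a genuinely different (and more self-contained) route than the paper. The paper proves the theorem by verifying, one coordinate $X_{mk}$ at a time, the four hypotheses of Theorem~1.2 of Glasserman (1991): it needs (i) a separate lemma showing that the set of points at which a pointwise maximum of finitely many $C^1$ scalar functions fails to be differentiable is countable, used to certify Glasserman's ``a.s.\ continuous and piecewise differentiable'' and countable-kink conditions along the path $X_{mk} \mapsto \bm{m}(\bm{X}) + \bm{C}(\bm{X})\bm{Z}$; (ii) a lemma identical in substance to your tie-set argument --- a tie between rows $i \neq j$ confines $\bm{Z}$ to an affine hyperplane of Gaussian measure zero, the no-duplicate-rows hypothesis guaranteeing the coefficient vector $\bm{C}(\bm{X})_{i,:} - \bm{C}(\bm{X})_{j,:}$ is nonzero; and (iii) the bound $\E\bigl[\sup \lvert \xi'\rvert\bigr] \le m^{**} + \sqrt{2/\pi}\, q\, C^{**} < \infty$, which plays exactly the role of your integrable Lipschitz constant $K_1 + K_2\lVert \bm{Z}\rVert$. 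You bypass the Glasserman framework entirely: dominating the difference quotients directly via $\lvert \max_i a_i - \max_i b_i\rvert \le \max_i \lvert a_i - b_i\rvert$, the mean-value theorem on a closed sub-ball, and dominated convergence makes the pathwise piecewise-differentiability and countability machinery (the paper's Lemma~1) unnecessary, since DCT on quotients needs only a.s.\ differentiability at the single point $\bm{X}$ together with a $\delta$-uniform integrable bound. The trade-off is that the paper's route leans on a citable packaged theorem whose hypotheses, once checked, cover more general performance functions $\xi$, while your argument is shorter and elementary but exploits the specific structure of $h$ as a finite max of functions affine in $\bm{Z}$ and $C^1$ in $\bm{X}$. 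One small point in your favor: your envelope step (a unique argmax at $(\bm{X},\bm{Z})$ persists as a strict maximizer locally, so $h$ coincides with a $C^1$ function near $\bm{X}$ and the full gradient exists) is slightly cleaner than the paper's Lemma~2, which argues only that non-existence of $\nabla h$ forces a tie among maximizers.
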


Theorem~\ref{thm_grad} requires continuous differentiability of $\bm{C}(\bm{X})$, which may seem difficult to verify. However, using \cite{smith1995differentiation}, which shows that $m$th-order differentiability of a 
symmetric and nonnegative definite matrix implies $m$th-order differentiability of the lower triangular matrix obtained from its Cholesky factorization, $\bm{L}(\bm{X})$ and thus $\bm{C}(\bm{X})$ have the same order of differentiability as $\Sig{n}$, whose order of differentiability can in turn be verified by examination of the prior covariance function $k(\cdot,\cdot)$. In addition, when $\Sig{n}$ is positive definite,  $\bm{C}(\bm{X})$ will not have duplicate rows.
We will use these facts below in Corollary~\ref{corollary}, after first discussing convergence, to provide easy-to-verify conditions under which unbiasedness and convergence to the set of stationary points hold.


\subsection{Convergence analysis} \label{sec:SGA_converge}
In this section, we show almost sure convergence of our proposed stochastic gradient ascent algorithm.
We assume that $\mathbb{A}$ is compact and can be written in the form 
$\mathbb{A} = \{\bm{x}: a_i'(\bm{x}) \leq 0, i=1, \ldots, m'\} \subseteq \mathbb{R}^{d}$, where $a_i'(\cdot)$ is any real-valued constraint function.
Then $H$ can be written in a form more convenient for analysis,
\begin{equation*}
H = \{\bm{X}: a_i(\bm{X}) \leq 0, i=1, \ldots, m\} \subseteq \mathbb{R}^{d \times q}, 
\end{equation*}
where $a_{(i-1)q+j}(\bm{X}) = a_i'(\bm{x}_j)$ with $\bm{x}_j$ being the $j$th point in $\bm{X}$,
and $a_i(\bm{X})$ for $i>m'q$ encodes the constraints $\lvert\lvert \bm{x}_i - \bm{x}_j \rvert\rvert \ge r$ and 
$\lvert\lvert \bm{x}_i - \bm{x}^{(\ell)} \rvert\rvert \ge r$ 
present in \eqref{eq:maxEI}.


The following theorem shows that Algorithm~\ref{algo1} converges to the set of stationary points
under conditions that include those of Theorem~\ref{thm_grad}.
The proof is available as supplemental material.
\begin{theorem} \label{thm:sga}
  Suppose the following assumptions hold,
  \begin{enumerate}
    \item 
      $a_i(\cdot), i = 1, \ldots, m$ are continuously differentiable.
    \item
      $\epsilon_t \rightarrow 0$ for $t \geq 0$; $\sum_{t=1}^{\infty} \epsilon_t = \infty$ and $\sum_{t=0}^{\infty} \epsilon_t^2 < \infty$.
    \item
      $\forall \bm{X} \in H$, $\bm{\mu}(\bm{X})$ and $\bm{\Sigma}(\bm{X})$ are twice continuously differentiable and $\bm{\Sigma}(\bm{X})$ 
      is positive definite.
  \end{enumerate}
 Then the sequence $\{\bm{X}_t: t=0,1,\ldots\}$ and its Polyak-Ruppert average $\{\overline{\bm{X}}_t : t=0,1,\ldots\}$ generated by algorithm 
 \eqref{eq:sga} converges almost surely to a connected set of stationary points of the $\qEI$ surface.
\end{theorem}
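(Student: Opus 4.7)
}
The plan is to cast Algorithm~\ref{algo1} as a projected stochastic approximation recursion and then invoke a standard convergence theorem for such recursions (e.g., Theorem~5.2.1 of \citealt{kushner2003stochastic}, or the constrained Kushner--Clark lemma). Concretely, I would show that $H$ is compact, that $\qEI$ is continuously differentiable on $H$, that the stochastic gradient estimator $\bm g(\bm X,\bm Z)$ is unbiased with a uniformly bounded second moment on $H$, and that the constraint set $H$ together with the projection $\prod_H$ satisfies the regularity required by the chosen theorem; the stepsize hypothesis is given by Assumption~2.

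First I would verify the compactness and smoothness inputs. Compactness of $\mathbb A$ and continuity of the $a_i'$ (Assumption~1) imply $H$ is a closed subset of the compact set $\mathbb A^q$, hence compact. Assumption~3 together with the result of \cite{smith1995differentiation} gives that $\bm L(\bm X)$, and therefore $\bm m(\bm X)$ and $\bm C(\bm X)$, is twice continuously differentiable on $H$. Positive definiteness of $\bm\Sigma(\bm X)$ forces $\bm L(\bm X)$ to have strictly positive diagonal entries, so in particular no two rows of $\bm C(\bm X)$ coincide. These are exactly the hypotheses of Theorem~\ref{thm_grad}, which gives unbiasedness of $\bm g(\bm X,\bm Z)$ and, via an easy argument passing the derivative back inside the expectation, continuity of $\nabla \qEI$ on $H$.

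Second I would establish the moment bound. From \eqref{eq:def_f} and \eqref{eq:gradient_def}, $\bm g(\bm X,\bm Z)$ is, almost surely, the gradient of a single linear functional $\bm e_{i^*}[\bm m(\bm X)+\bm C(\bm X)\bm Z]$ for the (random) maximizing index $i^*$, so $\|\bm g(\bm X,\bm Z)\| \le \|\nabla \bm m(\bm X)\| + \|\nabla \bm C(\bm X)\|\,\|\bm Z\|$. Because $\nabla \bm m$ and $\nabla \bm C$ are continuous on the compact set $H$, they are uniformly bounded, so $\sup_{\bm X\in H}\mathbb E\|\bm g(\bm X,\bm Z)\|^2 < \infty$. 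Averaging $M$ i.i.d.\ copies in \eqref{eq:G_eval} preserves both unbiasedness and the uniform second moment bound, which is what the convergence theorem needs for the martingale-difference noise term.

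Third, with these pieces in place, I would invoke the standard ODE-limit argument for projected stochastic approximation: the interpolated iterates track the projected ODE $\dot{\bm X} = \nabla \qEI(\bm X) + \bm z$, where $\bm z$ is the minimum-norm boundary reflection term keeping the trajectory in $H$; its stationary set is exactly the set of KKT points of $\max_{H}\qEI$, i.e.\ the stationary points of the $\qEI$ surface on $H$. Continuous differentiability of the $a_i$ (Assumption~1) gives the required constraint qualification for the projection dynamics to be well-defined and upper semicontinuous, and compactness of $H$ automatically provides tightness. The theorem then yields almost sure convergence of $\{\bm X_t\}$ to a connected component of this stationary set. Convergence of the Polyak--Ruppert average $\overline{\bm X}_T$ to the same connected set is then immediate from a Ces\`aro argument on the convergent (or asymptotically confined) sequence.

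The main obstacle I anticipate is the careful handling of the projection-induced reflection term: one must check that the projection $\prod_H$ is measurable (addressed by the tie-breaking rule in \eqref{eq:sga}) and that the resulting constrained ODE has the claimed stationary points, which requires that at every boundary point of $H$ the normal cone is generated by the active gradients $\nabla a_i$. This is where Assumption~1, together with the $r>0$ buffer built into $H$ (which keeps the pairwise-distance constraints smooth and active at most finitely often), is doing the real work, and I would spend the bulk of the proof verifying this structural condition so that a black-box constrained-SA theorem applies cleanly.
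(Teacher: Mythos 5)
Your proposal follows the same skeleton as the paper's proof: both reduce the theorem to the constrained stochastic-approximation convergence result of \cite{kushner2003stochastic} (the paper cites Theorem~2.3 of its Section~5), obtain unbiasedness from Theorem~\ref{thm_grad} (with positive definiteness of $\bm{\Sigma}(\bm{X})$ ruling out duplicate rows of $\bm{C}(\bm{X})$), get the uniform second-moment bound from continuity of the derivatives of $\bm{m}$ and $\bm{C}$ on the compact set $H$, and dispatch the Polyak--Ruppert average with a deterministic Ces\`aro argument (the paper's Lemma~\ref{lemma:polyak-ruppert}).

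However, you misallocate the difficulty, and in doing so leave a genuine gap. The step you dismiss as ``an easy argument passing the derivative back inside the expectation''---continuity of $\nabla \qEI(\bm{X}) = \E\, \bm{g}(\bm{X},\bm{Z})$ on $H$---is exactly where the paper spends the bulk of its proof: it conditions on $t_{-i}$, writes the conditional expectation in closed form via the one-dimensional expected-improvement formula \eqref{eq:ec_f}, differentiates, and needs a dedicated lemma (Lemma~\ref{lemma:bound_div}) to get a Lipschitz bound on the integrand before concluding continuity. This continuity cannot be waved through: the cited theorem requires the mean drift to equal $-\nabla\phi$ for a \emph{continuously differentiable} $\phi$, unbiasedness alone only identifies $\E\,\bm{g}$ with $\nabla\qEI$ pointwise, and $\bm{g}(\cdot,\bm{Z})$ is itself discontinuous where the maximizing index in \eqref{eq:def_f} switches, so no one-line interchange delivers the claim. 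Your instinct can nonetheless be repaired into a legitimately simpler alternative to the paper's computation: at each fixed $\bm{X}$, Lemma~\ref{lemma:2} gives almost-sure uniqueness of the maximizing index; on the no-tie event the index is locally constant in $\bm{X}$ by continuity of $\bm{m}$ and $\bm{C}$, so $\bm{g}(\cdot,\bm{Z})$ is a.s.\ continuous at $\bm{X}$; combined with the square-integrable envelope you already exhibited, dominated convergence yields continuity of $\E\,\bm{g}$. Conversely, the place where you propose to spend ``the bulk of the proof''---the normal-cone and constraint-qualification structure of the projection---is simply a hypothesis of the cited theorem (continuous differentiability of the $a_i$, supplied by Assumption~1), which the paper verifies in one line; and the role of $r>0$ is not to keep the distance constraints ``active at most finitely often,'' but to excise neighborhoods of duplicated points so that Assumption~3 (positive definiteness of $\bm{\Sigma}(\bm{X})$ on all of $H$) is tenable and the Cholesky-derivative envelopes remain finite on the compact domain.
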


The following corollary of Theorem~\ref{thm:sga} uses conditions that can be more easily checked prior to running MOE-qEI.  It requires that the sampled points are distinct, which can be made true by dropping duplicate samples.  Since function evaluations are deterministic, no information is lost in doing so. 

\begin{corollary} \label{corollary}
If the sampled points $\x{1:n}$ are distinct and 
\begin{enumerate}
    \item the prior covariance function $k$ is positive definite and twice differentiable,
    \item the prior mean function $\mu$ is twice differentiable,
    \item conditions 1 and 2 in Theorem~\ref{thm:sga} are met,
\end{enumerate}
then $\bm{X}_t$ and its Polyak-Ruppert average $\overline{\bm{X}}_t$ converge to a connected set of stationary points.
\end{corollary}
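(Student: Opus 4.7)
The plan is to derive Corollary~\ref{corollary} directly from Theorem~\ref{thm:sga} by verifying that its third hypothesis (twice continuous differentiability of $\bm{\mu}(\bm{X})$ and $\bm{\Sigma}(\bm{X})$, and positive definiteness of $\bm{\Sigma}(\bm{X})$) follows from the assumptions of the corollary. Conditions 1 and 2 of Theorem~\ref{thm:sga} are assumed outright, so the task reduces to checking condition 3.

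First I would verify twice continuous differentiability. From the explicit expressions in \eqref{eq:posterior_detail}, $\bm{\mu}(\bm{X})$ and $\bm{\Sigma}(\bm{X})$ are obtained by composing the functions $\bm{X}\mapsto\mu(\bx_i)$ and $\bm{X}\mapsto k(\bx_i,\cdot)$ with matrix multiplication and with the inversion of the fixed matrix $K(\x{1:n},\x{1:n})$. Since the $\x{1:n}$ are distinct and $k$ is positive definite, $K(\x{1:n},\x{1:n})$ is invertible, so this inverse is a finite constant matrix independent of $\bm{X}$. Twice continuous differentiability of $\mu$ and $k$ then passes through these operations to give twice continuous differentiability of $\bm{\mu}(\bm{X})$ and $\bm{\Sigma}(\bm{X})$ on $H$.

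The main obstacle is positive definiteness of $\bm{\Sigma}(\bm{X})$ for every $\bm{X}\in H$. Here I would exploit the constraint defining $H$: because $r>0$ and $\x{1:n}$ are distinct, the combined collection of points $(\x{1},\ldots,\x{n},\bx_1,\ldots,\bx_q)$ consists of pairwise distinct elements. Positive definiteness of $k$ therefore forces the full joint prior covariance
\begin{equation*}
A(\bm{X}) = \begin{pmatrix} K(\x{1:n},\x{1:n}) & K(\x{1:n},\bm{X}) \\ K(\bm{X},\x{1:n}) & K(\bm{X},\bm{X}) \end{pmatrix}
\end{equation*}
to be positive definite. The posterior covariance $\bm{\Sigma}(\bm{X})$ is precisely the Schur complement of the $(1,1)$ block of $A(\bm{X})$, and the Schur complement of a block in a positive definite matrix is positive definite, which gives the claim.

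With all three hypotheses of Theorem~\ref{thm:sga} now satisfied, I would invoke that theorem to conclude that $\bm{X}_t$ and its Polyak-Ruppert average $\overline{\bm{X}}_t$ generated by \eqref{eq:sga} converge almost surely to a connected set of stationary points of the $\qEI$ surface, completing the proof. I expect the differentiability step to be routine; the genuine content lies in recognizing that the distance-$r$ constraints baked into $H$ and the distinctness of $\x{1:n}$ together guarantee that every feasible $\bm{X}$ augments $\x{1:n}$ to a configuration on which $k$ produces an invertible Gram matrix, enabling the Schur complement argument.
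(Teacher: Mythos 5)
Your proposal is correct and takes essentially the same route as the paper: it verifies hypothesis 3 of Theorem~\ref{thm:sga} from the explicit posterior formulas in \eqref{eq:posterior_detail} (differentiability of $\bm{\mu}(\bm{X})$ and $\bm{\Sigma}(\bm{X})$ inherited from $\mu$ and $k$ through the fixed invertible matrix $K(\x{1:n},\x{1:n})$) and then invokes the theorem. If anything, your Schur-complement step---using $r>0$ in the definition of $H$ together with distinctness of $\x{1:n}$ to make the joint $(n+q)$-point Gram matrix positive definite, so that $\bm{\Sigma}(\bm{X})$ is positive definite as the Schur complement of its $(1,1)$ block---is more careful than the paper's own proof, which simply asserts positive definiteness of the matrices appearing in \eqref{eq:posterior_detail} (a claim that is not even well-posed for the rectangular cross-covariance blocks) and concludes positive definiteness of $\Sig{n}$ without spelling out this argument.
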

\begin{proof}{Proof to Corollary~\ref{corollary}.}
Since $\x{1:n}$ are distinct, and $\bm{X} \in H$, and the prior covariance function is positive definite and twice continuously differentiable, then $K\left(\bm{X}, \bm{x}^{(1:n)}\right)$, $K\left(\bm{x}^{(1:n)}, \bm{X}\right)$, $K\left(\bm{X}, \bm{X}\right)$ and $K\left(\bm{x}^{(1:n)}, \bm{x}^{(1:n)}\right)$ in~\eqref{eq:posterior_detail} are all positive definite and twice continuously differentiable. Since
the prior mean function is also twice continuously differentiable, it follows that $\bmu{n}=\bm{\mu}(\bm{X})$ and $\Sig{n}=\bm{\Sigma}(\bm{X})$ defined in~\eqref{eq:posterior_detail} are twice continuously differentiable, and in addition, $\Sig{n}$ is positive definite. Thus the conditions of Theorem~\ref{thm:sga} are verified, and
its conclusion holds.
\end{proof}

\section{Numerical results}
\label{sec:numerical}
In this section, we present numerical experiments demonstrating the performance of MOE-qEI. The implementation of MOE-qEI follows Algorithm~\ref{algo1}, and is available in the open source software package \enquote{MOE}
\citep{moe-github2015}.

We first discuss the choice of constants in Algorithm~\ref{algo1}: $R$, $T$, $M$, $N$, $\gamma$, and $a$.
\begin{enumerate}
\item
Number of starting points, $R$: this should be larger and of the same order as the number of equivalence classes of stationary points of the $\qEI$ surface, where we identify a set of stationary points as in the same class if they can be obtained from each other by permuting $\bm{x}_1,\ldots,\bm{x}_q$.  ($\qEI$ is symmetric and such permutations do not change its value.)
However, we do not know the number of such equivalence classes, and their number tends to grow with $n$ as the surface grows more modes.
Setting $R$ larger increases our chance of finding the global maximum but increases computation. 
In our numerical experiments, we set $R=n$ to capture this trade-off between runtime and solution quality.
As a diagnostic, one can check whether $R$ is large enough by checking the number of unique solutions we obtain; if we obtain the same solution repeatedly from multiple restarts, this suggests $R$ is large enough.
\item
Number of steps in stochastic gradient ascent, $T$, and stepsize sequence parameters $a$ and $\gamma$: For simplicity, we set $a=1$.  We set $\gamma=0.7$, which is significantly below $1$, to ensure that the stepsize sequence decreases slowly, as is recommended when using Polyak-Ruppert averaging.  We then plotted the $\qEI$ and norm of the gradient from stochastic gradient ascent versus $t$ for a few sample problems.  Finding that convergence occurred well before the 100th iterate, we set $T=100$. As a diagnostic, one may also assess convergence by evaluating the gradient using a large number of Monte Carlo samples at the final iterate $T$ and comparing its norm to $0$.
\item
Number of Monte Carlo samples $M$: this determines the accuracy of the gradient estimate and therefore affects stochastic gradient ascent's convergence. 
We set $M=1000$ and as discussed in Section~\ref{sec:optimization_qEI} performed an experiment to justify this setting.
While we ran our experiments on a CPU except where otherwise stated to ensure a fair comparison with other competing algorithms, for which a GPU implementation is not available, 
the \enquote{MOE} software package provides a GPU implementation that can be used to increase the amount of parallelism used in MOE-qEI.
When using the GPU implementation, we recommend setting $M=10^6$ because the GPU's parallelism makes averaging a large number of independent replicates fast, and the reduction in noise reduces the number of iterates needed for convergence by stochastic gradient ascent.
\item
Number of Monte Carlo samples for estimating $\qEI$, $N$: we estimate the $\qEI$ at a limiting solution only once for each restart, i.e., $R$ times, and so setting $N$ large introduces little computational overhead.  We set $N=10^6$ to ensure an essentially noise-free selection of the best of the limiting solutions, and we assess this choice by examining the standard error of our estimates of the $\qEI$.
\end{enumerate}

For the outer optimization of the objective function, we begin with a small dataset, typically sampled using a Latin hypercube design, to train the Gaussian Process model described in Section~\ref{sec:gp_model}. In our numerical experiments, we use $\mu=0$ and a squared exponential kernel $k$ whose hyperparameters are estimated using an empirical Bayes approach: we set them to the values that maximize the log marginal likelihood of the observed data.
With the trained Gaussian Process model, we perform the inner optimization of MOE-qEI described in Algorithm~\ref{algo1} to find the batch of points to evaluate, and after evaluating them we update the hyperparameters as well as the Gaussian Process model. We repeat this process over a number of iterations and report the best solution found in each iteration.

Noise-free function evaluations may often lead to ill-conditioned covariance matrices $K(\cdot,\cdot)$ in~\eqref{eq:posterior}. To resolve this problem, we adopt a standard trick from Gaussian process regression~\cite[Section~3.4.3]{RaWi06}: we manually impose a small amount of noise $\sim \mathcal{N}(0, \sigma^2)$ where $\sigma^2 = 10^{-4}$ and use Gaussian Process regression designed for noisy settings, which is almost identical to \eqref{eq:posterior} except that $K(\x{1:n},\x{1:n})$ is replaced by $K(\x{1:n},\x{1:n}) + \sigma^2 I_n$ where $I_n$ is the identity matrix \cite[Section~2.2]{RaWi06}.

\subsection{Comparison on the outer optimization problem} \label{sec:horserace}
Constant Liar is a heuristic algorithm motivated by \eqref{eq:qEI} proposed by \cite{GiLeCa08}, which uses a greedy approach to iteratively construct a batch 
of $q$ points. At each iteration of this greedy approach, the heuristic uses the sequential EGO algorithm to find a point that maximizes the expected improvement. However, since the posterior used by EGO depends on the current batch of points, which have not yet been evaluated, Constant Liar imposes a heuristic response (the \enquote{liar}) at this point, and updates the Gaussian Process model with this \enquote{liar} value. The algorithm stops when $q$ points are added, and reports the batch for function evaluation.

\begin{figure}[H]
\centering
    \centering
    \subfloat[0][Branin2]{
        \includegraphics[width=0.45\textwidth]{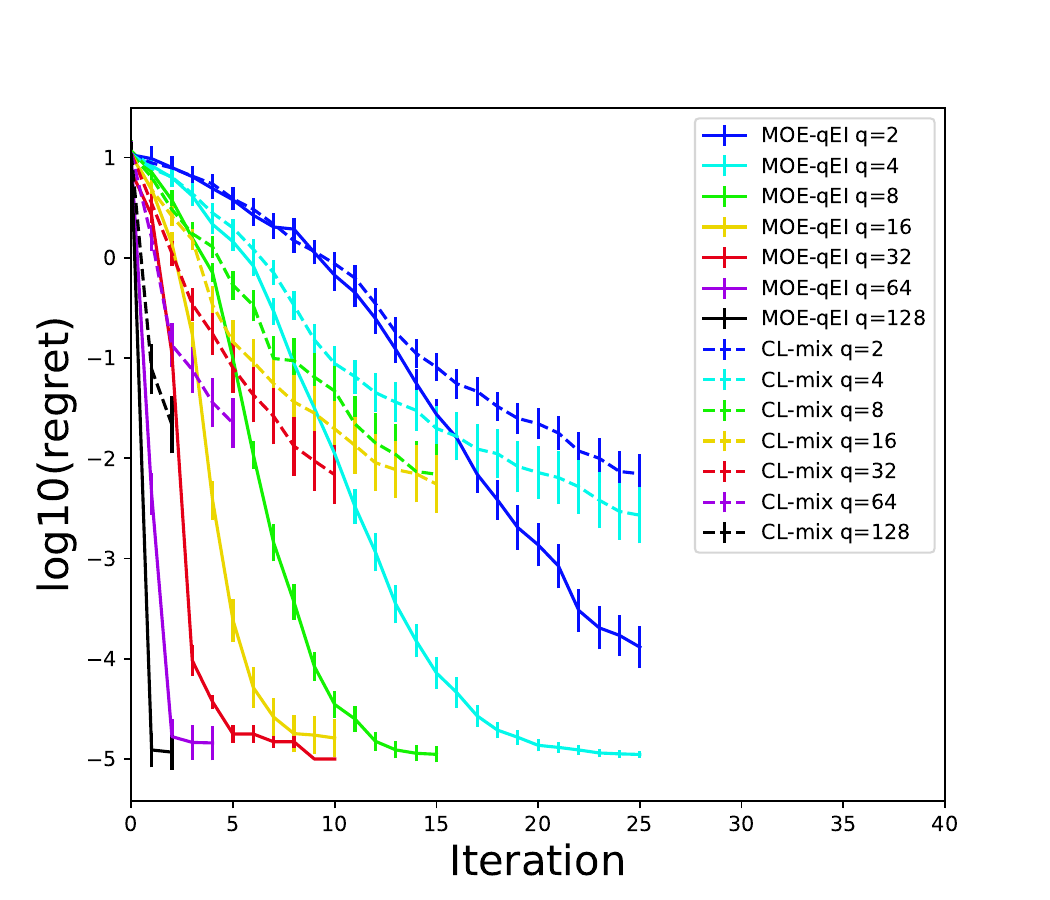}
    }
    \subfloat[1][Hartmann3]{
        \includegraphics[width=0.45\textwidth]{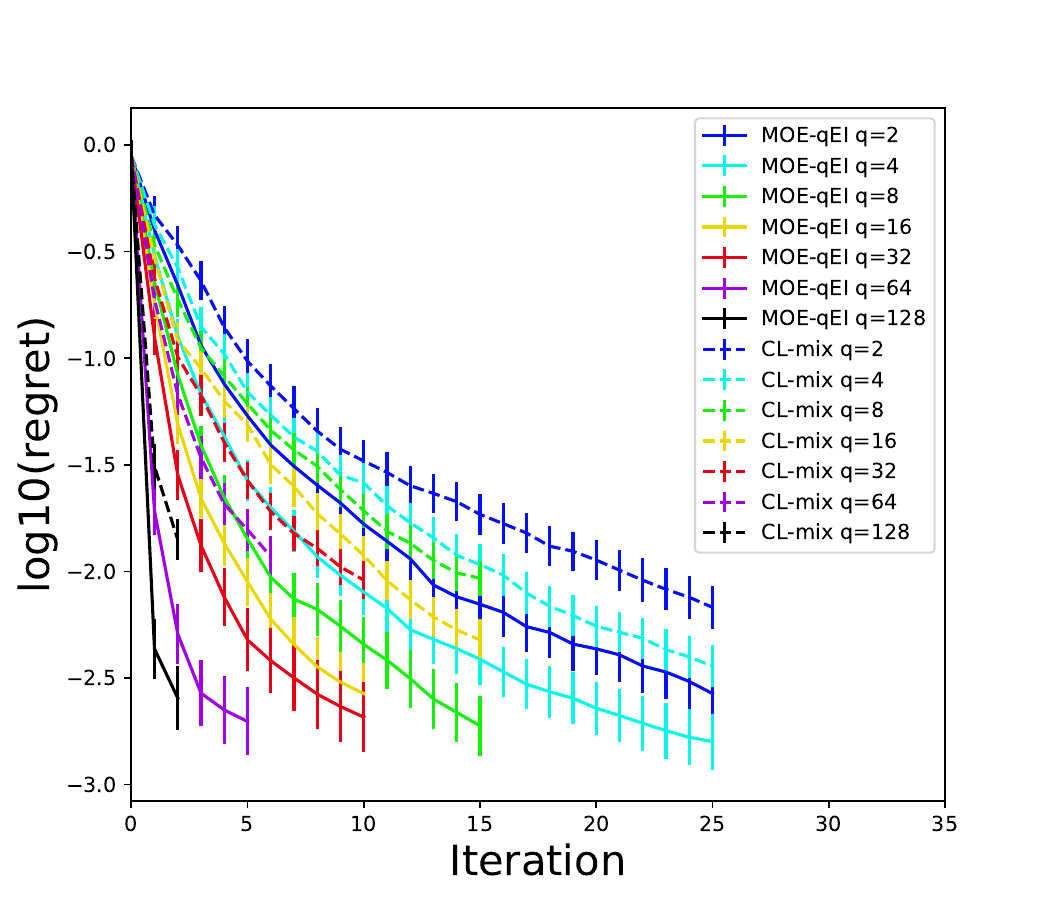}
    }
    \qquad
    \subfloat[2][Ackley5]{
        \includegraphics[width=0.45\textwidth]{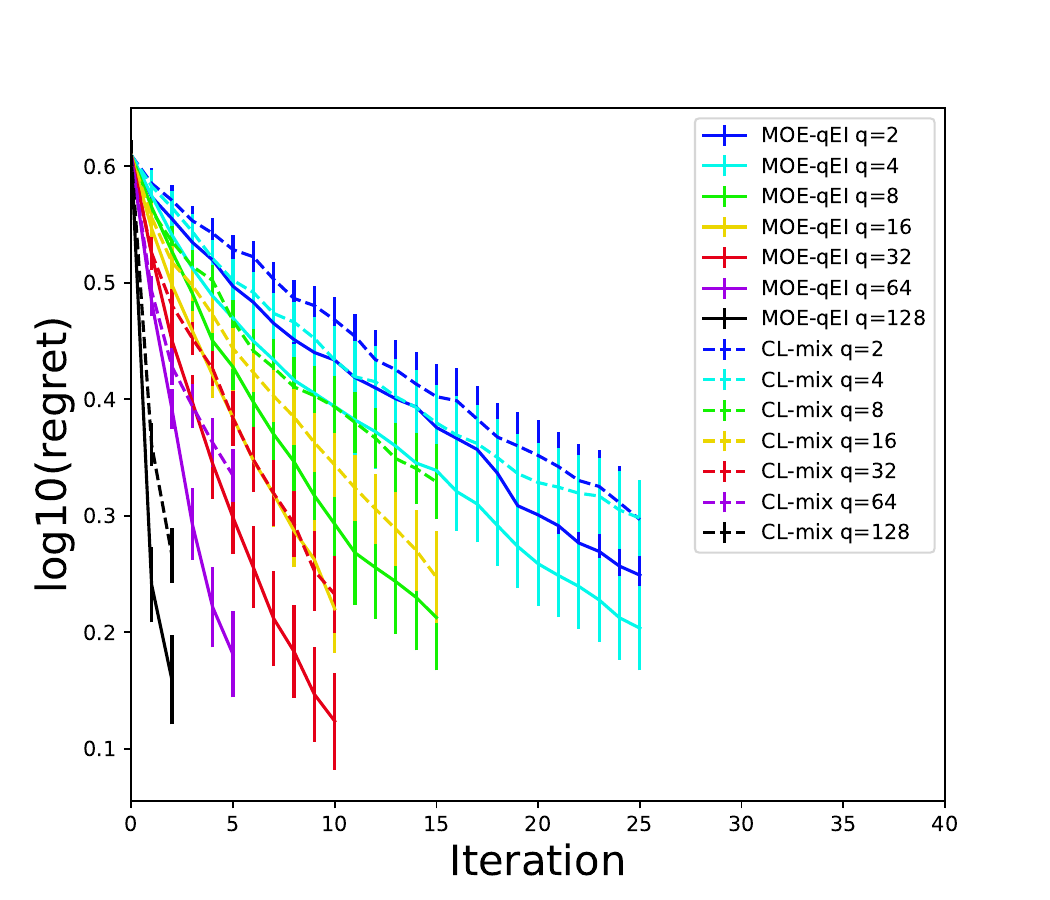}
    }
    \subfloat[3][Hartmann6]{
        \includegraphics[width=0.45\textwidth]{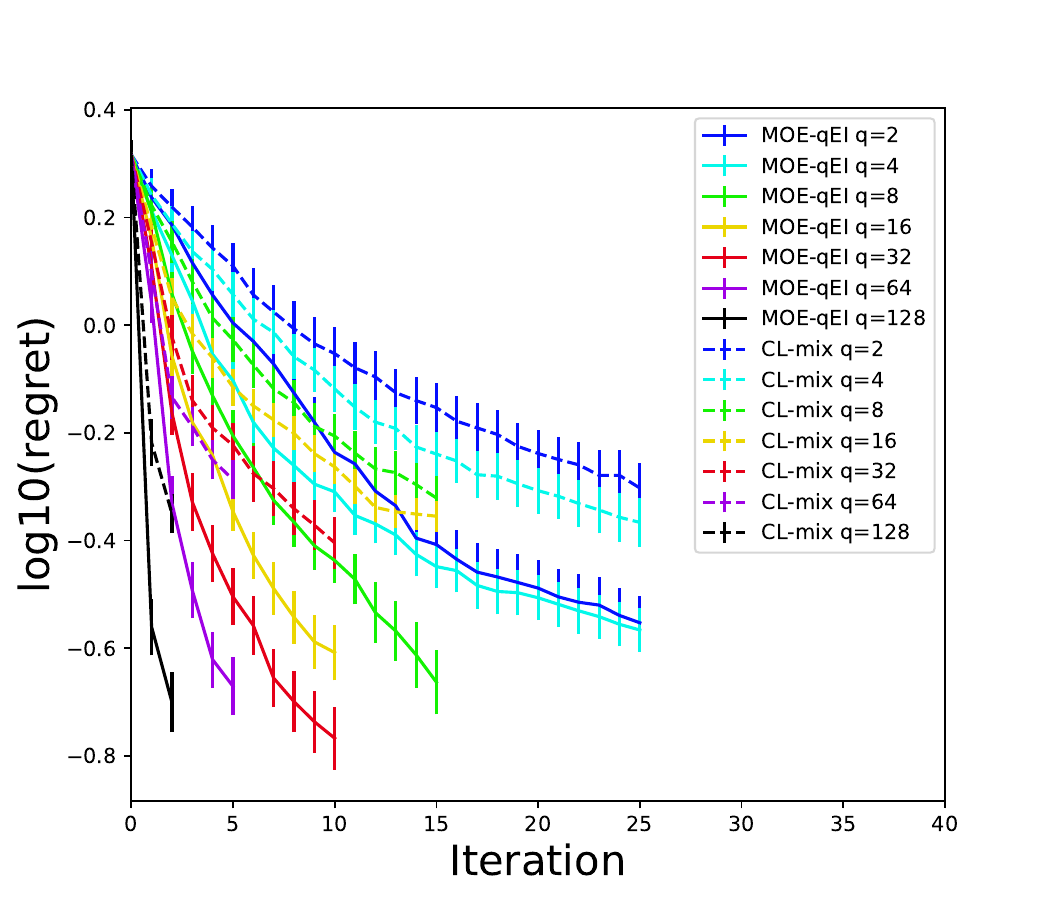}
    }
    \caption{Comparison against Constant Liar: $\log_{10}(\text{regret})$ vs. iteration for MOE-qEI (solid line) and CL-mix (dashed line), where the error bars show 95\% confidence intervals obtained from 100 repeated experiments with different sets of initial points. MOE-qEI converges faster with better solution quality than the  heuristic method CL-mix for all $q$.} 
    \label{fig:benchmark}
\end{figure}

There are three variants of Constant Liar (CL), which use three different strategies for choosing the liar value:
CL-min sets the liar
value to the minimum response observed so far; CL-max sets it to the maximum response observed so far; and CL-mix is a hybrid of
the two, computing one set of points using CL-min, another set of points using CL-max, and sampling the set that has the higher $\qEI$. Among the three methods, CL-mix was shown by \cite{chevalier2013fast} to have the best overall performance, and therefore we compare MOE-qEI against CL-mix.

We ran MOE-qEI and CL-mix on a range of standard test functions for global optimization~\citep{jamil2013literature}: 2-dimensional Branin2; 3-dimensional Hartmann3; 5-dimensional Ackley5; and 6-dimensional Hartmann6.
In the experiment, we first draw $(2d+2)$ points in the domain using a Latin hypercube design, where $d$ is the dimension of the objective function, and fit a Gaussian Process model using the initial points.
Thereafter, we let MOE-qEI and CL-mix optimize over the test functions and report
for each iteration the regret as $\text{regret}=f^*-\text{best solution so far}$ for each iteration, where we note that each of the problems considered is a minimization problem.
 We repeat the experiment 100 times using different initial sets of points, and report the average performance of both algorithms in Figure~\ref{fig:benchmark}. The result shows that MOE-qEI consistently finds better solutions than the heuristic method on all four test functions.
 
Next, we compare MOE-qEI and CL-MIX at different levels of parallelism using the same experimental setup as above.
The sequential EGO algorithm makes the same decisions as MOE-qEI when $q=1$ and so this may also be seen as a comparison against EGO.
Figure~\ref{fig:speedup} shows that MOE-qEI achieves significant speedup over EGO as $q$ grows, indicating substantial potential time saving using parallelization and MOE-qEI in Bayesian optimization tasks.

\begin{figure}[H]
\centering
    \centering
    \subfloat[0][Branin2]{
        \includegraphics[width=0.45\textwidth]{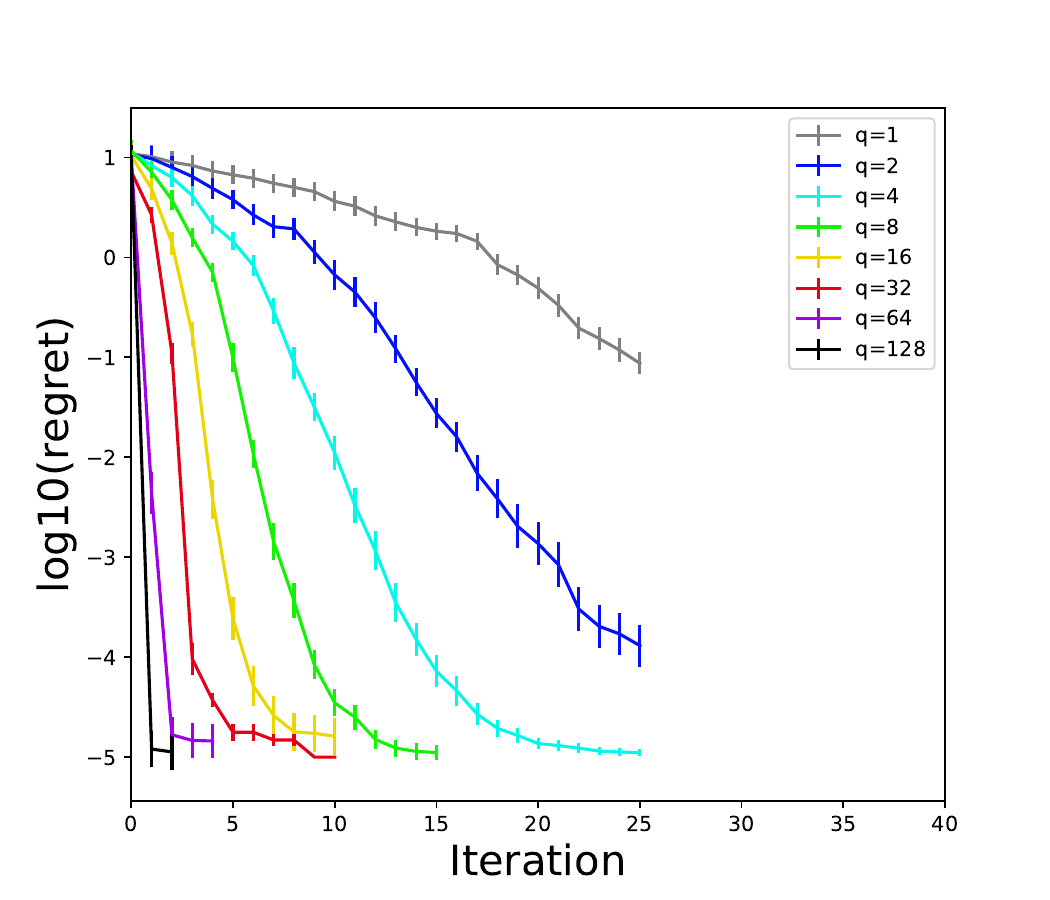}
    }
    \subfloat[1][Hartmann3]{
        \includegraphics[width=0.45\textwidth]{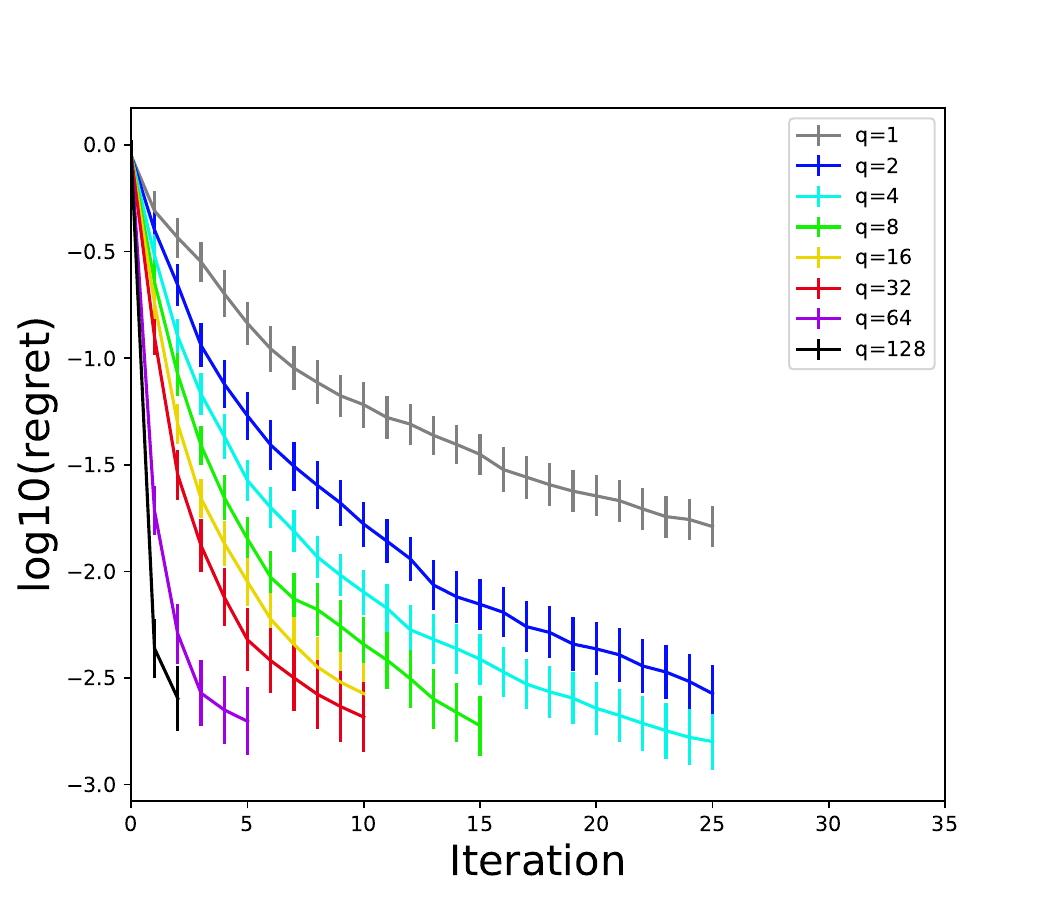}
    }
    \qquad
    \subfloat[2][Ackley5]{
        \includegraphics[width=0.45\textwidth]{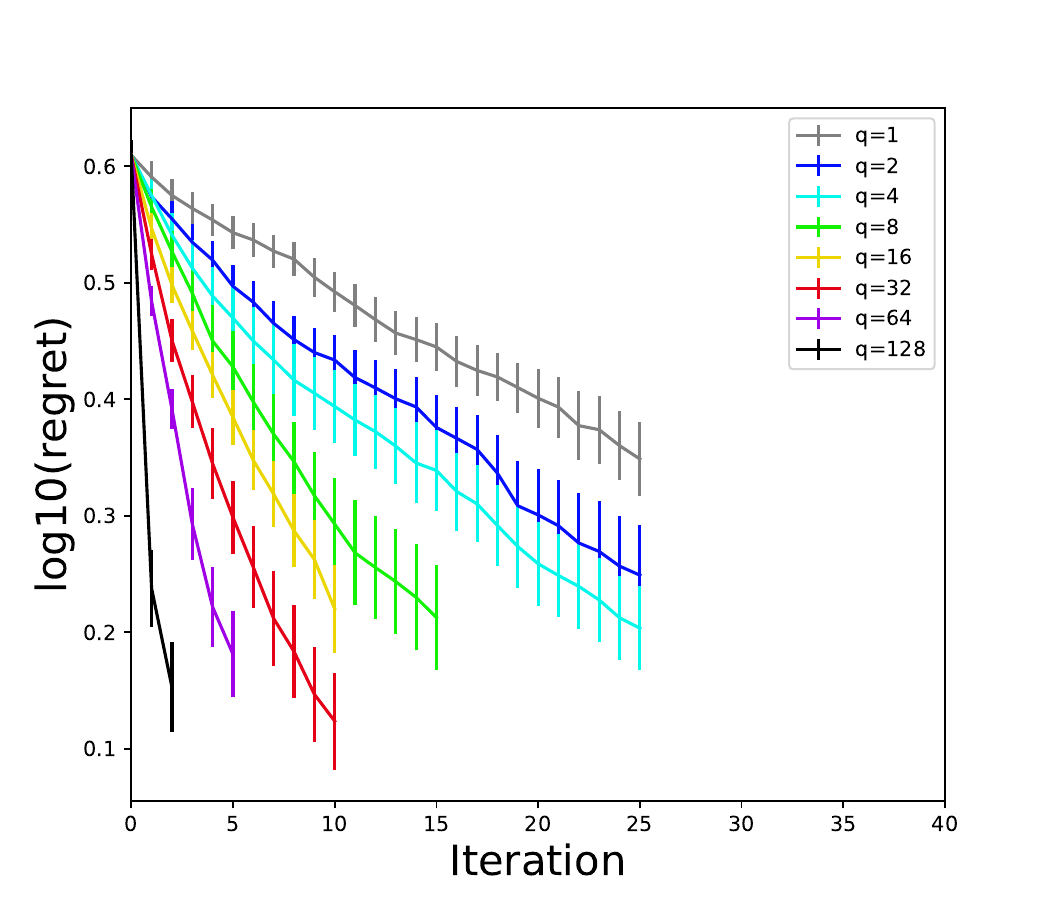}
    }
    \subfloat[3][Hartmann6]{
        \includegraphics[width=0.45\textwidth]{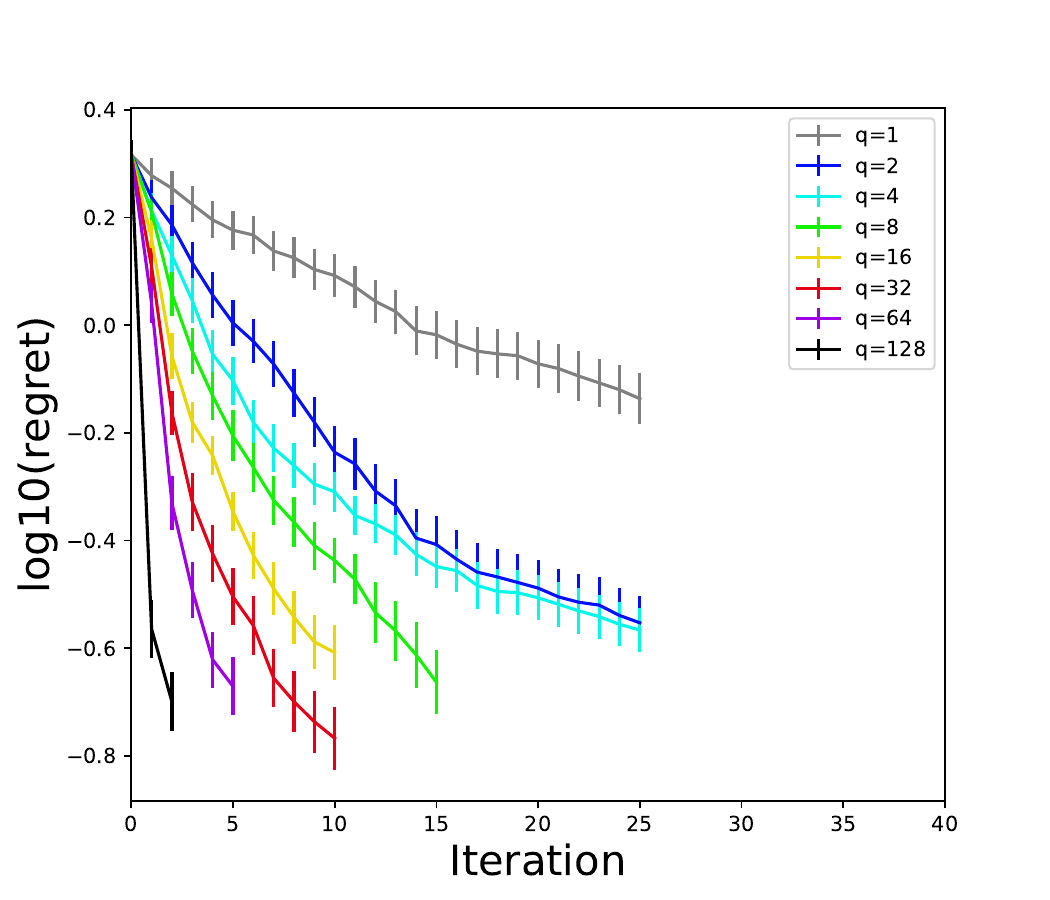}
    }
    \caption{Comparison against EGO: $\log_{10}(\text{regret})$ vs. iteration for different $q$, where the error bars show 95\% confidence intervals obtained from 100 repeated experiments with different sets of initial points.} 
    \label{fig:speedup}

\end{figure}

\subsection{Comparison on the inner optimization problem} \label{sec:exact}
\cite{chevalier2013fast} provided a closed-form formula for $\qEI$ and argued that it computes $\qEI$ 
\enquote{very fast for reasonably low values of $q$ (typically less than 10)}. The closed-form formula is
provided as follows for reference, modified to use the notation in this paper.
Recall~\eqref{eq:posterior} and let $f(\bm{X}) = (Y_1, \ldots, Y_q)$ be a random vector with mean $\bmu{n}$ and covariance matrix $\Sig{n}$. For $k \in \{1, \ldots, q\}$ consider the vectors $\bm{Z}^k := (Z_1^k, \ldots, Z_q^k)$ defined as follows:
\begin{equation*}
\begin{split}
Z_j^k &:= Y_k - Y_j, j \neq k, \\
Z_k^k &:= Y_k.
\end{split}
\end{equation*}
Let $\bm{m}^k$ and $\Sigma^k$ denote the mean and covariance matrix of $\bm{Z}^k$, and define the vector $\bm{b}^k \in \mathbb{R}^q$ by $b_k^k = f_n^*$ and $b_j^k = 0$ if $j \neq k$. Then the closed-form formula is
\begin{equation}
\qEI(\bm{X}) = \sum_{k=1}^q \left( (f_n^* - \mu_k^{(n)}) \Phi_q(\bm{b}^{k}-\bm{m}^{k}, \Sigma^{k}) + \sum_{i=1}^q \Sigma_{ik}^{k} \phi_{m_i^{k}, \Sigma_{ii}^{k}}(b_i^{(k)}) \Phi_{q-1} (\bm{c}_{.i}^{k}, \Sigma_{.i}^{k}) \right),
\label{eq:fast_ei}
\end{equation}
where $\bm{c}^{k}$ is as defined in \cite{chevalier2013fast}.

This formula requires $q$ calls of the $q$-dimensional multivariate normal CDF ($\Phi_q(\cdot, \cdot)$), and $q^2$ calls of the $q-1$ dimensional multivariate normal CDF ($\Phi_{q-1}(\cdot, \cdot)$). Since computing multivariate normal CDFs, which are often implemented with numerical integration or Monte Carlo sampling \citep{genz1992numerical}, is expensive to evaluate even for moderate $q$, calculating this analytically formula quickly becomes slow and numerically challenging as $q$ grows.

While~\cite{chevalier2013fast} did not propose using this closed-form formula to solve the inner optimization problem \eqref{eq:maxEI}, one can adapt it to this purpose by using it within any derivative-free optimization method.  We implemented this approach in the MOE package, where we use the L-BFGS~\citep{liu1989limited} solver from SciPy \citep{scipy} as the derivative free optimization solver. We call this approach \enquote{Benchmark 1}.
\begin{figure}[H]
\centering
    \centering
    \subfloat[0][Branin2]{
        \includegraphics[width=0.45\textwidth]{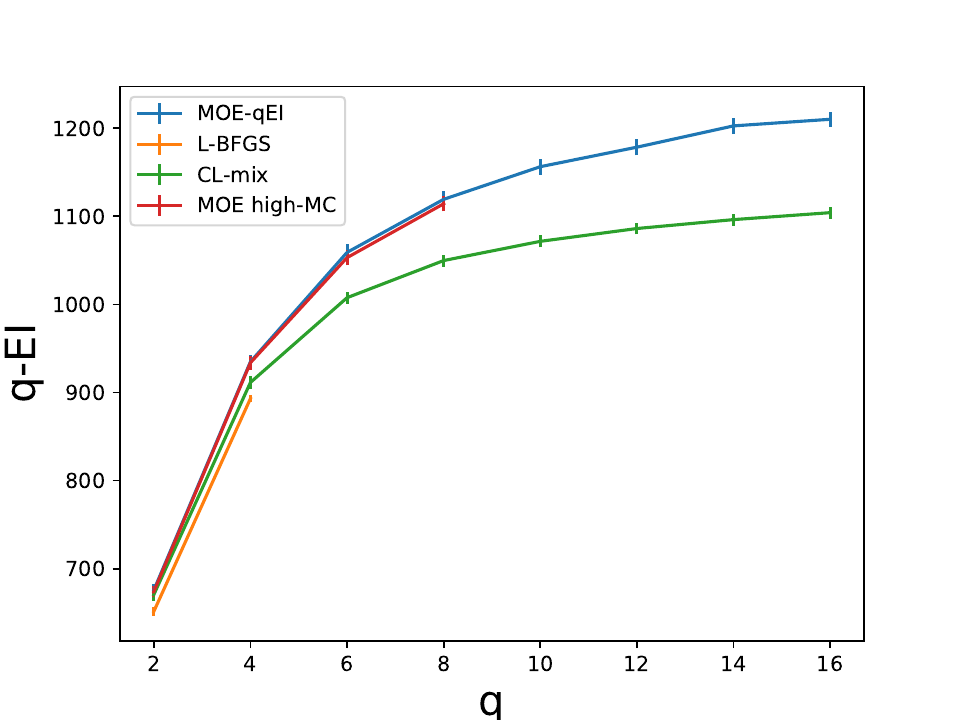}
    }
    \subfloat[1][Hartmann3]{
        \includegraphics[width=0.45\textwidth]{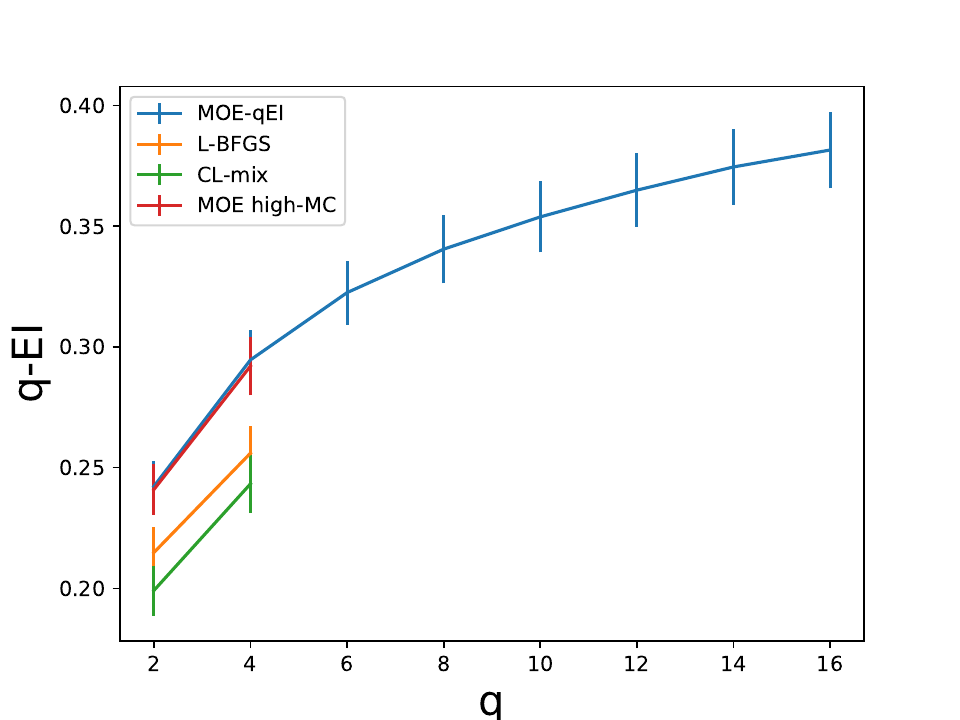}
    }
    \qquad
    \subfloat[2][Ackley5]{
        \includegraphics[width=0.45\textwidth]{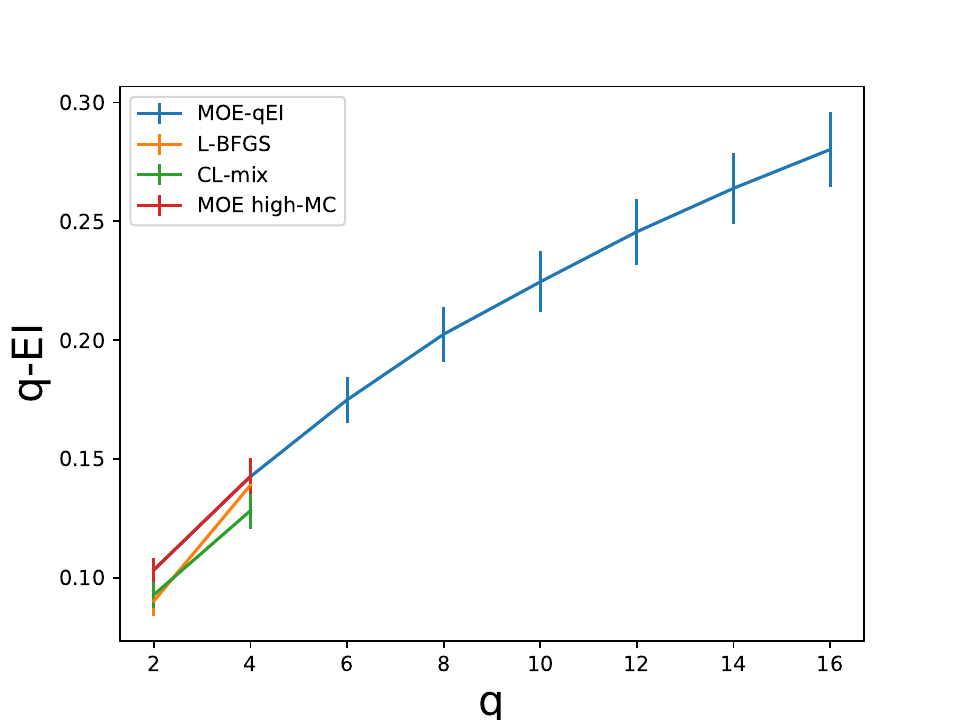}
    }
    \subfloat[3][Hartmann6]{
        \includegraphics[width=0.45\textwidth]{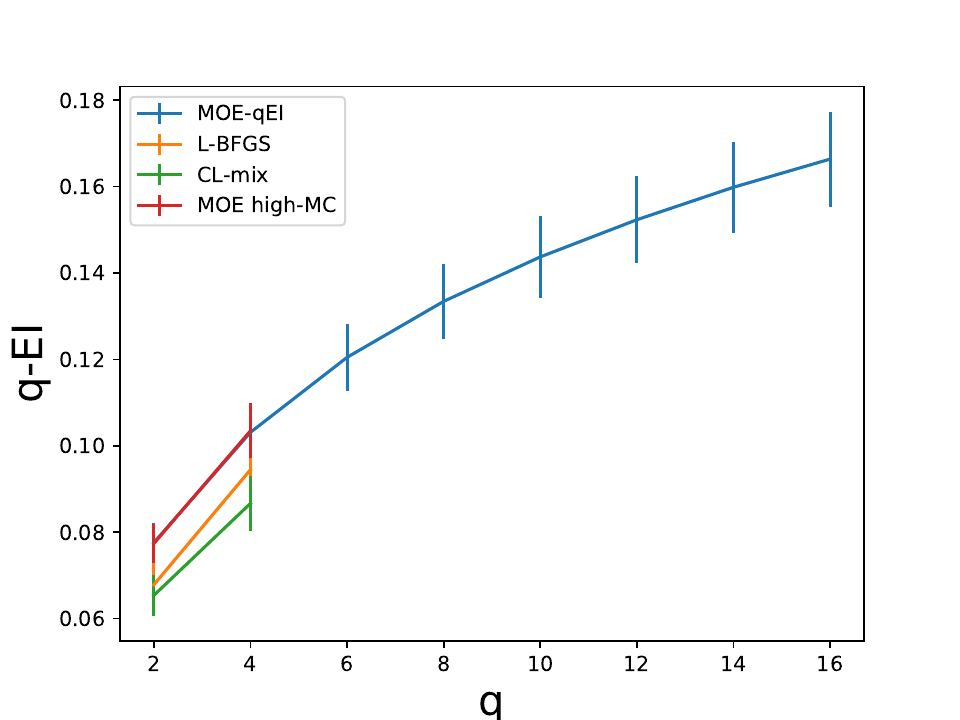}
    }
    \caption{Comparison of algorithms for solving the inner optimization problem: solution quality (maximum $\qEI$) vs. $q$ for different algorithms solving the inner optimization problem. For each test function, we generated 500 instances of the inner optimization problem by randomly sampling $(2d+2)$ points, and the plot shows the average solution quality and 95\% confidence interval for the expected solution quality over 500 problem instances.} 
    \label{fig:qei_opt}
\end{figure}

\begin{figure}[H]
\centering
    \centering
    \subfloat[0][Branin2]{
        \includegraphics[width=0.45\textwidth]{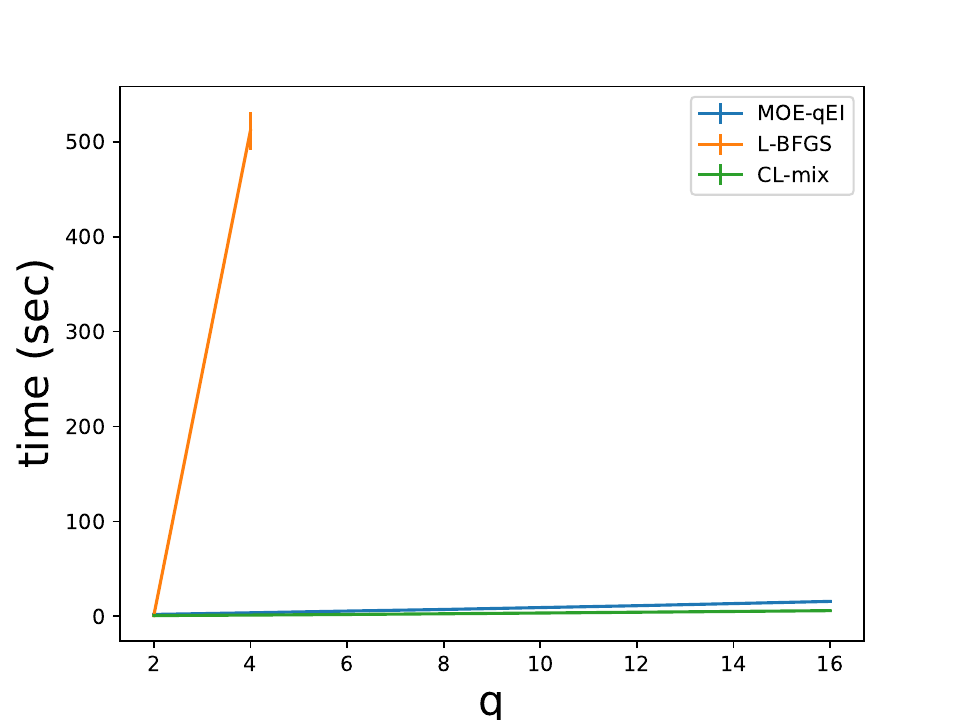}
    }
    \subfloat[1][Hartmann3]{
        \includegraphics[width=0.45\textwidth]{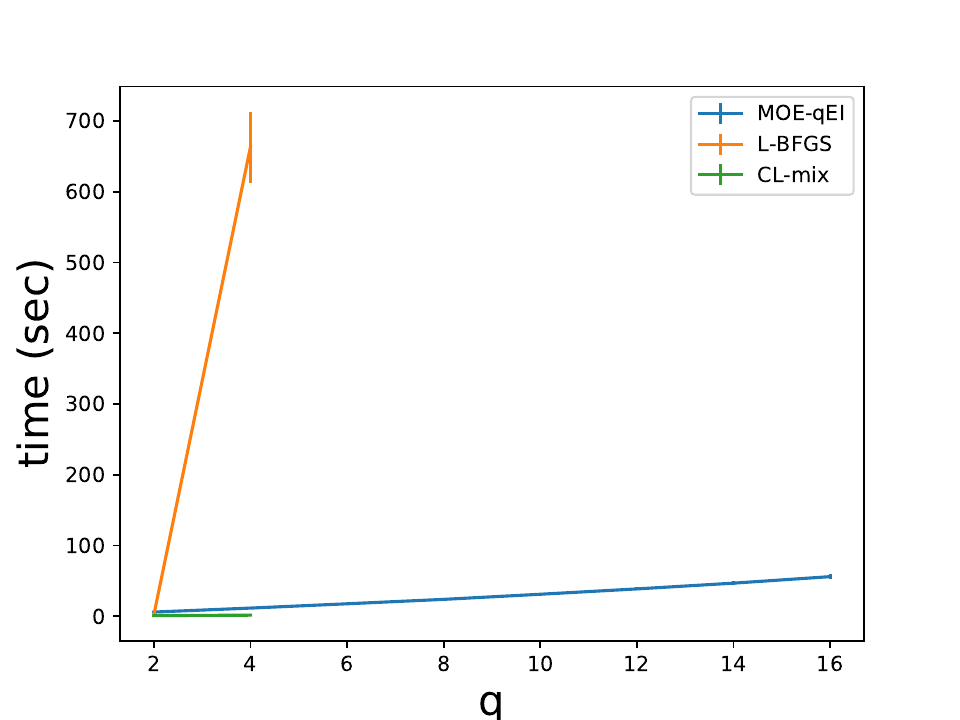}
    }
    \qquad
    \subfloat[2][Ackley5]{
        \includegraphics[width=0.45\textwidth]{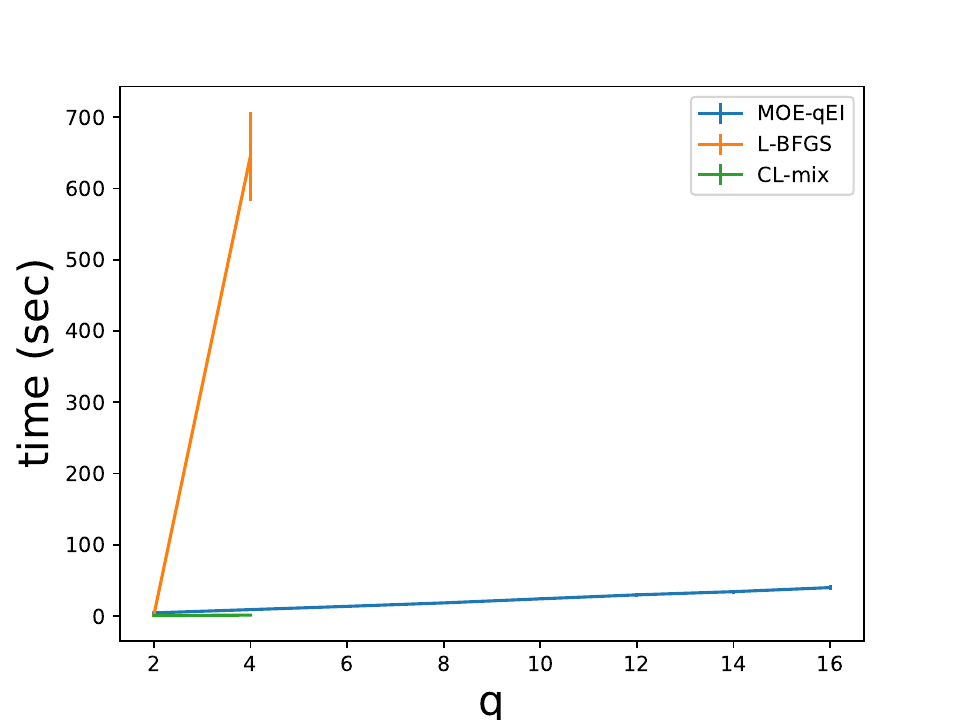}
    }
    \subfloat[3][Hartmann6]{
        \includegraphics[width=0.45\textwidth]{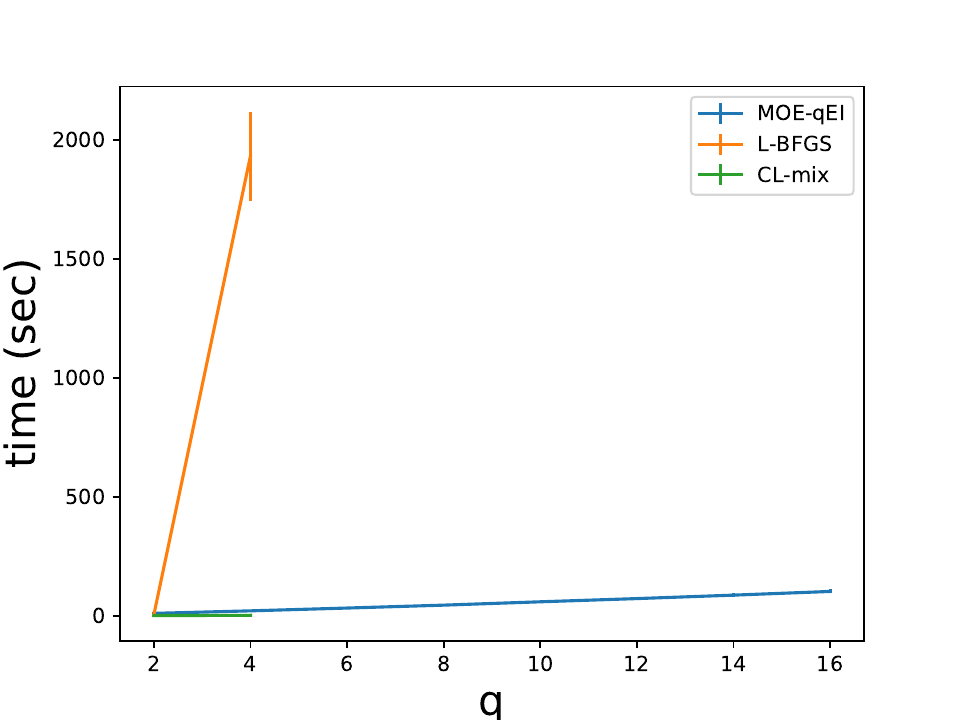}
    }
    \caption{Comparison of algorithms for solving the inner optimization problem: Runtime vs. $q$ for different algorithms solving the inner optimization problem.} 
    \label{fig:qei_time}
\end{figure}

We compare MOE-qEI, CL-mix, and Benchmark 1 in solving the inner optimization problem, in terms of both solution quality and runtime, as shown in Figure~\ref{fig:qei_opt} and \ref{fig:qei_time}. Without surprise, MOE-qEI achieves the best solution quality among the three, and its running time is almost comparable to CL-mix, which is expected to be the fastest approach because it sacrifices solution quality for speed.  We ran Benchmark 1 with $q$ going only up to 4 because its runtime goes up drastically with $q$.
MOE-qEI's runtime scales well as $q$ grows, making it feasible to run in applications with high parallelism. To our surprise, CL-mix achieves competitive solution quality against Benchmark 1, using only a fraction of Benchmark 1's runtime. Therefore, despite the promise of using the closed-form formula for $\qEI$ to fully solve the inner optimization problem, this formula's long runtime and the slow convergence of L-BFGS due to lack of derivative information make Benchmark 1 a less favorable option than CL-mix in practice. 

Figure~\ref{fig:qei_opt} also includes a method called \enquote{MOE high-MC}. This method runs MOE-qEI on GPU with the number of Monte Carlo samples $M$ for the gradient estimator set to $10^7$, much higher than the default setting of 1000. As shown in the figure, the solution quality for \enquote{MOE high-MC} is the same as that of MOE-qEI, which confirms that $M=1000$ is sufficiently large.

\subsection{Comparison on evaluation of $\nabla \qEI$} \label{sec:compare_against_exact_eval}
A recently published book chapter~\cite{marmin2015differentiating}, developed independently and in parallel to this work, proposed a method for computing $\nabla \qEI$ using a closed-form formula derived from~\eqref{eq:fast_ei}, and then proposed to use this formula inside a gradient-based optimization routine to solve~\eqref{eq:maxEI}.
The formula is complex and therefore we do not reproduce it here.  

This formula faces even more severe computational challenges than \eqref{eq:fast_ei};
indeed, it requires $O(q^4)$ calls to multivariate normal CDFs with dimension between $(q-3)$ and $q$.  Because computing high-dimensional multivariate normal CDFs is itself challenging, 
this closed-form evaluation becomes extremely time-consuming.

MOE-qEI's Monte-Carlo based approach to evaluating $\nabla \qEI$ offers three advantages over using the closed-form formula:
first, numerical experiments below suggest that computation scales better with $q$;
second, it can be easily parallelized, with significant speedups possible through parallel computing on graphical processing units (GPUs), as is implemented within the MOE library;
third, by using a small number of replications to make each iteration run quickly, and by using it within a stochastic gradient ascent algorithm that averages noisy gradient information intelligently across iterations, we may more intelligently allocate effort across iterations, only spending substantial effort to estimate gradients accurately late in the process of finding a local maximum.


We first show that computation of exact gradients using our gradient estimator with many replications on a GPU scales better with $q$ through numerical experiments. We compare with closed-form gradient evaluation on a CPU as implemented in the \enquote{DiceOptim} package \citep{diceoptim} and call it \enquote{Benchmark 2}. 
We computed $\nabla \qEI$ at 200 randomly chosen points
from a 2-dimensional design space to obtain a $95\%$ confidence interval for the average computation time.
To make the gradient evaluation in MOE-qEI close to exact,
we increased the number of Monte Carlo samples 
used in the gradient estimator to $10^7$, which ensures that the variance of each component of the gradient is on the order of $10^{-10}$ or smaller
for all $q$ we consider in our experiments. Given the large number of Monte Carlo samples, we use the GPU option in the MOE package to speed up computation.  This GPU implementation is made possible by the trivial parallelism supported by our Monte-Carlo-based gradient estimator, while a massively parallel GPU implementation of closed-form gradient evaluation would be more challenging.

\begin{figure}[H]
\centering
\includegraphics[width=0.5\textwidth]{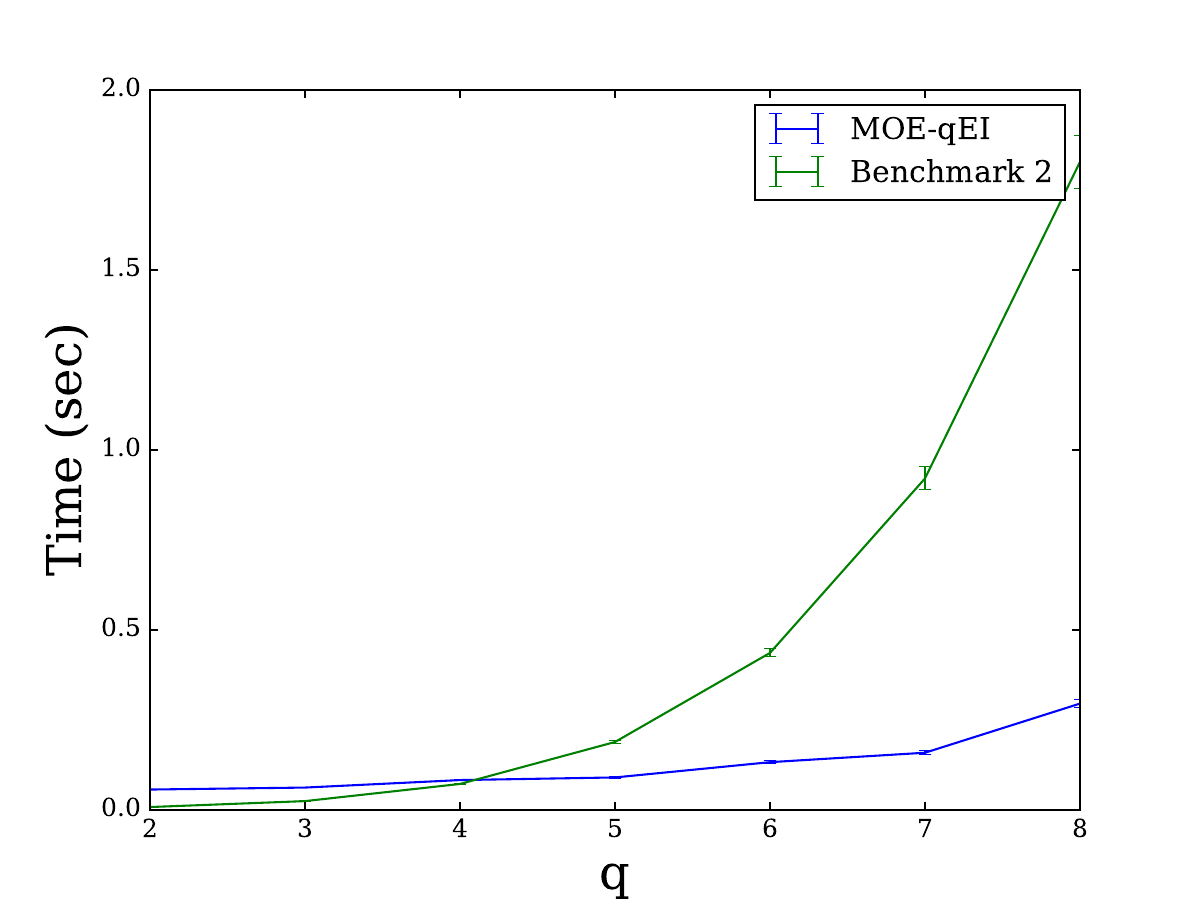}
\caption{Comparison against closed-form evaluation of $\nabla \qEI$: average time to compute $\nabla \qEI$ with high precision v.s. $q$, comparing the gradient-based estimator from MOE-qEI using a large number of samples ($10^7$) in a parallel GPU implementation with the closed-form formula from \cite{marmin2015differentiating}. The stochastic gradient estimator in MOE-qEI scales better in $q$ and is faster when $q \geq 4$.}
\label{fig:time_grad_ei}
\end{figure}

Figure~\ref{fig:time_grad_ei} shows
that computational time for Benchmark 2 increases quickly as $q$ grows, but increases slowly for MOE-qEI's Monte Carlo estimator, with this Monte Carlo estimator being faster when $q \geq 4$.   This difference in performance arises because gradient estimation in MOE-qEI focuses Monte Carlo effort on calculating a single high-dimensional integral, while the closed-form formula decomposes this high-dimensional integral of interest into a collection of other high-dimensional integrals that are almost equally difficult to compute, and the size of this collection grows with $q$.

We may reduce the number of Monte Carlo samples used by MOE-qEI substantially while still providing high-accuracy estimates: if we reduce $M$ from $10^7$ to $10^4$, the variance of each component of the gradient remains below $10^{-7}$.
Since the GPU implementation provides a roughly 100x to 1000x speedup, the CPU-only implementation of our stochastic gradient estimator with this reduced value of $M$ has run-time comparable to or better than the GPU-based results pictured in Figure~\ref{fig:time_grad_ei}, showing that even without the hardware advantage offered by a GPU our stochastic gradient estimator provides high-accuracy estimates faster than \cite{marmin2015differentiating} for $q\ge4$.



Moreover, because stochastic gradient ascent is tolerant to noisy gradients, we may obtain additional speed improvements by reducing the number of Monte Carlo samples even further.  
Using fewer Monte Carlo samples in each iteration has the potential to increase efficiency by only putting effort toward estimating the gradient precisely when we are close to the stationary point, which stochastic gradient ascent performs automatically through its decreasing stepsize sequence.  
Thus, our stochastic gradient estimator is both faster when using a large number of samples to produce essentially exact estimates, and it offers more flexibility in its ability to produce inexact estimates at low computational cost.





\section{Conclusions}
We proposed an efficient method based on stochastic approximation for implementing a conceptual parallel Bayesian global optimization algorithm proposed by \cite{GiLeCa08}. To accomplish this, we used infinitessimal perturbation analysis (IPA) to construct a stochastic gradient estimator and showed that this estimator is unbiased. We also provided convergence analysis of the stochastic gradient
ascent algorithm with the constructed gradient estimator. Through numerical experiments, we demonstrate that our method outperforms the existing state-of-the-art approximation methods.

\bibliographystyle{apalike}
\bibliography{MOE}

\section*{Appendix: proofs of results in the main paper}
\begin{lemma}
\label{lemma:1}
Suppose functions $\lambda_i(x) : \mathbb{R} \mapsto \mathbb{R}$, $i=1,\ldots,m$ are continuously differentiable on a compact interval $\mathcal{X}$. 
Let $\Lambda(x) = \max_{i=1}^m \lambda_i(x)$, and $\overset{\sim}{\mathcal{X}}$ be the set of points where $\Lambda(x)$ fails to be differentiable. 
Then $\overset{\sim}{\mathcal{X}}$ is countable.
\end{lemma}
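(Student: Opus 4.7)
The plan is to characterize non-differentiability of $\Lambda$ in terms of the active set at each point, and then show that the resulting set of ``bad'' points is at most countable because each is an isolated zero of some difference function $\lambda_i - \lambda_j$.

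First I would fix $x_0 \in \mathcal{X}$ and define the active set $A(x_0) = \{i : \lambda_i(x_0) = \Lambda(x_0)\}$. The key structural observation is that if all $i \in A(x_0)$ share a common derivative $c = \lambda_i'(x_0)$, then $\Lambda$ is differentiable at $x_0$ with derivative $c$. The reason: for $j \notin A(x_0)$ we have $\lambda_j(x_0) < \Lambda(x_0)$, so by continuity of the $\lambda_j$'s (finitely many of them), there is a neighborhood $U$ of $x_0$ on which $\lambda_j(x) < \max_{i \in A(x_0)} \lambda_i(x)$ for every inactive $j$. Hence on $U$, $\Lambda(x) = \max_{i \in A(x_0)} \lambda_i(x)$, and each $\lambda_i$ with $i \in A(x_0)$ satisfies $\lambda_i(x_0+h) = \Lambda(x_0) + ch + o(h)$. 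Taking the max over finitely many such expansions preserves the expansion, giving differentiability of $\Lambda$ at $x_0$.

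Consequently, the non-differentiability set $\widetilde{\mathcal{X}}$ is contained in the union
\begin{equation*}
\bigcup_{1 \le i < j \le m} S_{ij}, \qquad S_{ij} := \{x \in \mathcal{X} : \lambda_i(x) = \lambda_j(x) \text{ and } \lambda_i'(x) \neq \lambda_j'(x)\}.
\end{equation*}
Since this union is finite, it suffices to show that each $S_{ij}$ is at most countable. Write $g_{ij} := \lambda_i - \lambda_j$, which is continuously differentiable. Each $x_0 \in S_{ij}$ is a zero of $g_{ij}$ with $g_{ij}'(x_0) \neq 0$, so by continuity of $g_{ij}'$ the function $g_{ij}$ is strictly monotone on some neighborhood of $x_0$, making $x_0$ an isolated zero of $g_{ij}$ and in particular an isolated point of $S_{ij}$.

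Finally, any set of isolated points of $\mathbb{R}$ is at most countable: around each point one can choose an open interval with rational endpoints containing no other point of the set, which gives an injection into the countable collection of rational intervals. Hence each $S_{ij}$ is countable, and $\widetilde{\mathcal{X}}$, being contained in a finite union of countable sets, is countable. The only subtlety is the first step — verifying that coincidence of active derivatives really does force joint differentiability of $\Lambda$ — and this hinges on using continuity of the finitely many $\lambda_j$ to push the inactive ones uniformly below the active ones in a neighborhood of $x_0$.
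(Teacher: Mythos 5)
Your proof is correct, and it takes a somewhat different route from the paper's. The paper proceeds in two stages: it first treats $m=2$ directly (non-differentiability forces $\lambda_1(x_0)=\lambda_2(x_0)$ with $\lambda_1'(x_0)\neq\lambda_2'(x_0)$, whence the point is isolated, and countability follows from a covering argument that sorts bad points into classes $D(n)$ by the radius $1/n$ of a punctured neighborhood of differentiability and bounds each class in a bounded interval); it then reduces general $m$ to the pairwise case by computing, via Taylor expansions, the one-sided derivatives of $\Lambda$ at a non-differentiable point --- the right derivative is $\max\{\lambda_i'(x): i \in S\}$ and the left is $\min\{\lambda_i'(x): i \in S\}$ over the active set $S$ --- and concluding that $\max(\lambda_{i^+},\lambda_{i^-})$ is itself non-differentiable there. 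You invert this logic: instead of extracting a bad pair from non-differentiability via one-sided derivatives, you prove the contrapositive structural lemma that a common derivative across the active set forces differentiability of $\Lambda$ (using the local reduction $\Lambda = \max_{i\in A(x_0)}\lambda_i$ and the fact that a maximum of finitely many functions with identical first-order expansions inherits that expansion), which immediately gives the inclusion $\widetilde{\mathcal{X}} \subseteq \bigcup_{i<j} S_{ij}$; you then replace the paper's $D(n)$ counting with the standard rational-interval injection for sets of isolated points. Both arguments rest on the same geometric fact --- crossings of $C^1$ functions with distinct slopes are isolated zeros of $\lambda_i-\lambda_j$ --- but your version handles all $m$ in one pass, avoids the one-sided derivative computation (the more delicate step in the paper's proof), and in fact uses compactness of $\mathcal{X}$ nowhere, so it is marginally more general.
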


\begin{proof}{Proof of Lemma~\ref{lemma:1}.} We first consider $m=2$, and later extend to $m>2$. Let $x_0$ be a point of non-differentiability of $\Lambda(x)$. 
Then 
$\lambda_1(x_0) = \lambda_2(x_0)$ and 
$\lambda_1'(x_0) \neq \lambda_2'(x_0)$.
(If the first condition were not true, and suppose $\lambda_1(x_0) > \lambda_2(x_0)$, then continuity of $\lambda_1$ and $\lambda_2$ would imply $\Lambda(x) = \lambda_1(x)$ in an open neighborhood of $x_0$.
If the first condition were true but not the second,
then $\Lambda'(x_0) = \lambda_1'(x_0) = \lambda_2'(x_0)$.)



By continuity of $\lambda_1'$ and $\lambda_2'$, $\exists \delta > 0$ such that $\lambda_1'(x) > \lambda_2'(x)$ for all
$x\in(x_0-\delta, x_0+\delta)$. 
Therefore $\Lambda(x)=\lambda_1(x)$ at $x \in (x_0, x_0+\delta)$
and $\Lambda(x)=\lambda_2(x)$ at $x \in (x_0-\delta, x_0)$.
Thus $\Lambda$ is differentiable on $(x_0-\delta, x_0+\delta) \backslash \{x_0\}$. 

Let $n(x_0)$ be the smallest integer $n\ge1$ such that 
$\Lambda$ is differentiable on $(x_0-1/n, x_0+1/n) \backslash \{x_0\}$ and  let $D(n)$ be the set of non-differentiable points $x$ such that $n(x)=n$. In an interval of length $L$, there can be at most 
$Ln+1$ points in $D(n)$. Hence the set of all non-differentiable points $\overset{\sim}{\mathcal{X}} = \cup_{n=1}^{\infty} D(n)$ is countable.

Now let $m>2$. We show that all points of discontinuity of $\Lambda(x)$ are also points of discontinuity of $\max(\lambda_i(x), \lambda_j(x))$ for at least one pair of $i,j$.
Let $S(x) = \arg\max_i \lambda_i(x)$. 
Using Taylor's theorem, 
for $\Delta \in \mathbb{R}$,
\begin{equation*}
\lambda_i(x+\Delta) = \lambda_i(x) + \lambda_i'(x) \Delta + h_i(x+\Delta) \Delta,
\end{equation*}
where $h_i(\cdot)$ is a function such that $\lim_{\Delta \rightarrow 0} h_i(x+\Delta)=0$. We write the left and right derivative of $\Lambda$ at $x$ as
\begin{equation*}
\begin{split}
\lim_{\Delta \rightarrow 0^+} \frac{\Lambda(x+\Delta) - \Lambda(x)}{\Delta} &= \lim_{\Delta \rightarrow 0^+} \frac{\max \{ \lambda_i(x) + \lambda_i'(x) \Delta + h_i(x+\Delta) \Delta\} - \lambda_{i^*}(x)}{\Delta},\\
&= \lim_{\Delta \rightarrow 0^+} \max \{ \frac{\lambda_i(x) - \lambda_{i^*}(x)}{\Delta} + \lambda_i'(x) + h_i(x)\}, \\
&= \max \{\lambda_i'(x): i \in S \},
\end{split}
\end{equation*}
and
\begin{equation*}
\begin{split}
\lim_{\Delta \rightarrow 0^+} \frac{\Lambda(x) - \Lambda(x-\Delta)}{\Delta} &= \lim_{\Delta \rightarrow 0^+} \frac{\lambda_{i^*}(x) - \max \{ \lambda_i(x) - \lambda_i'(x) \Delta - h_i(x-\Delta) \Delta\}}{\Delta},\\
&= \lim_{\Delta \rightarrow 0^+} \min \{ \frac{\lambda_{i^*}(x) - \lambda_{i}(x)}{\Delta} + \lambda_i'(x) + h_i(x)\}, \\
&= \min \{\lambda_i'(x): i \in S \}.
\end{split}
\end{equation*}
If the left and right derivative are equal, $\Lambda$ is differentiable at $x$.
If not, let $i^+ \in \argmax\{ \lambda_i'(x) : i \in S\}$ and $i^- \in \argmin\{ \lambda_i'(x) : i \in S\}$.
Then $\max(\lambda_{i^+}(x),\lambda_{i^-}(x))$ fails to be differentiable at $x$.

Thus the non-differentiable points of $\Lambda'(x)$ are a subset of the union of the non-differentiable points of $\max(\lambda_i(x), \lambda_j(x))'$ over all $i,j$, and so it is a subset of a finite union of countable sets, which is countable.
\end{proof}

\begin{lemma}
\label{lemma:2}
If $\bm{m}(\bm{X})$ and $\bm{C}(\bm{X})$ are 
differentiable in a neighborhood of $\bm{X}$, and there are no 
duplicated rows in $\bm{C}(\bm{X})$, 
then $P \left(\bm{e}_i [\bm{m}(\bm{X}) + \bm{C}(\bm{X}) \bm{Z}] = \bm{e}_j [\bm{m}(\bm{X}) + \bm{C}(\bm{X}) \bm{Z}]\right)=0$ for any $i \ne j$,
and $\nabla h(\bm{X}, \bm{Z})$ exists almost surely
for any $\bm{X}$.
\end{lemma}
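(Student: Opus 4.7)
The plan is to prove the coordinate-equality statement first via a nondegenerate-Gaussian argument, and then deduce almost-sure existence of $\nabla h(\bm{X},\bm{Z})$ by combining that statement with continuity of the argmax on a neighborhood of $\bm{X}$.

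For the first claim, fix $\bm{X}$ and distinct $i,j \in \{0,1,\ldots,q\}$, and set
\[
D_{ij}(\bm{Z}) := (\bm{e}_i - \bm{e}_j)\bigl[\bm{m}(\bm{X}) + \bm{C}(\bm{X})\bm{Z}\bigr] = \bigl(\bm{m}(\bm{X})_i - \bm{m}(\bm{X})_j\bigr) + (\bm{C}_i - \bm{C}_j)\bm{Z},
\]
where $\bm{C}_i$ denotes the $i$-th row of $\bm{C}(\bm{X})$. Because $\bm{Z}$ is a standard multivariate normal vector, $D_{ij}$ is Gaussian with mean $\bm{m}(\bm{X})_i - \bm{m}(\bm{X})_j$ and variance $\|\bm{C}_i - \bm{C}_j\|^2$. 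The no-duplicate-rows hypothesis gives $\bm{C}_i \neq \bm{C}_j$, so the variance is strictly positive, $D_{ij}$ is a nondegenerate Gaussian, and $P(D_{ij}=0)=0$. This is the first statement.

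For the differentiability claim, fix $\bm{X}$ and let $E$ be the event on which every pairwise difference $D_{ij}(\bm{Z})$ with $i \neq j$ is nonzero; by the previous paragraph and a finite union bound over the $\binom{q+1}{2}$ pairs, $P(E)=1$. On $E$ there is a unique maximizer $i^\ast = i^\ast(\bm{Z})$ of $\bm{e}_i\bigl[\bm{m}(\bm{X}) + \bm{C}(\bm{X})\bm{Z}\bigr]$, separated from the other coordinates by a strictly positive gap. By the hypothesized differentiability of $\bm{m}$ and $\bm{C}$ on a neighborhood of $\bm{X}$, each map $\bm{X}' \mapsto \bm{e}_i\bigl[\bm{m}(\bm{X}') + \bm{C}(\bm{X}')\bm{Z}\bigr]$ is in particular continuous near $\bm{X}$, so the strict gaps persist on a small neighborhood of $\bm{X}$. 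Consequently $h(\bm{X}', \bm{Z}) = \bm{e}_{i^\ast}\bigl[\bm{m}(\bm{X}') + \bm{C}(\bm{X}')\bm{Z}\bigr]$ on that neighborhood, and differentiability of $\bm{m}$ and $\bm{C}$ at $\bm{X}$ yields the existence of $\nabla h(\bm{X},\bm{Z})$.

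The only subtlety worth flagging is the role of the zero-th row. By construction $\bm{C}(\bm{X})_{0j} = 0$ and $\bm{m}(\bm{X})_0 = 0$, so ``no duplicate rows'' must be read as including row $0$ among the rows compared; this forbids any other row of $\bm{C}(\bm{X})$ from vanishing, which is precisely what is needed to guarantee $\|\bm{C}_i - \bm{C}_j\|^2 > 0$ uniformly for every pair $i \neq j$ in $\{0,\ldots,q\}$, including pairs involving index $0$. Beyond this bookkeeping, the argument is a routine combination of the observation that the linear form $v^\top \bm{Z}$ in an isotropic Gaussian has variance $\|v\|^2$ with the fact that a finite max of continuous functions is locally equal to its unique argmax function whenever that argmax is strictly attained.
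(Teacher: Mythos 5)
Your proposal is correct and takes essentially the same route as the paper's proof: the paper likewise reduces non-differentiability to a tie among maximizers, applies a union bound over pairs $i \neq j$, and kills each tie event by noting that $\{\bm{Z} : (\bm{C}_{i\cdot}(\bm{X}) - \bm{C}_{j\cdot}(\bm{X}))\bm{Z} = m_j(\bm{X}) - m_i(\bm{X})\}$ is a lower-dimensional affine set of measure zero, which is the same fact as your nondegenerate-Gaussian computation with variance $\lvert\lvert \bm{C}_i - \bm{C}_j \rvert\rvert^2 > 0$. Your explicit persistence-of-gap argument merely fills in the step the paper asserts tersely (that a unique maximizer forces $h(\cdot,\bm{Z})$ to agree locally with a differentiable branch), and your remark on row $0$ matches the paper's convention in \eqref{eq:multiEI_detail} that $\bm{m}(\bm{X})_0 = 0$ and the zeroth row of $\bm{C}(\bm{X})$ vanishes.
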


\begin{proof}{Proof of Lemma~\ref{lemma:2}.} Observe that
$h(\bm{X}, \bm{Z}) = \bm{e}_{I^*} \left[ \bm{m}(\bm{X}) + \bm{C}(\bm{X}) \bm{Z} \right]$,
where $I^* \in \argmax_{i=0,\ldots,q} \bm{e}_i \left[ \bm{m}(\bm{X}) + \bm{C}(\bm{X}) \bm{Z} \right] := \mathcal{S}$. 
$\nabla h(\bm{X}, \bm{Z})$ can fail to exist only if $\exists I_1, I_2 \in \mathcal{S}$ with 
$\bm{e}_{I_1} (\frac{\partial \bm{m}(\bm{X})}{\partial x_{ik}} + 
\frac{\partial \bm{C}(\bm{X})}{\partial x_{ik}} \bm{Z}) \neq \bm{e}_{I_2} (\frac{\partial \bm{m}(\bm{X})}{\partial x_{ik}} + 
\frac{\partial \bm{C}(\bm{X})}{\partial x_{ik}} \bm{Z})$. Thus,
\begin{align*}
P(\nabla h(\bm{X}, \bm{Z}) \text{\, does not exist}) &\leq P ( \vert\mathcal{S}\vert \geq 2), \\
&\leq \frac{1}{2} \sum_{i \neq j} P \left(\bm{e}_i [\bm{m}(\bm{X}) + \bm{C}(\bm{X}) \bm{Z}] = \bm{e}_j [\bm{m}(\bm{X}) + \bm{C}(\bm{X}) \bm{Z}]\right),\\
&= \frac{1}{2}\sum_{i \neq j} P \left( \left(\bm{C}(\bm{X})_{i\cdot} - \bm{C}(\bm{X})_{j\cdot} \right) \bm{Z} = m(\bm{X})_j - m(\bm{X})_i \right),
\end{align*}
where $\bm{C}_{i\cdot}(\bm{X})$ is the $i$th row of $\bm{C}(\bm{X})$.
Since $\bm{C}_{i\cdot}(\bm{X}) \neq \bm{C}_{j\cdot}(\bm{X})$, $\{ \bm{Z}: \left(\bm{C}_{i\cdot}(\bm{X}) - \bm{C}_{j\cdot}(\bm{X}) \right) \bm{Z} = m_j(\bm{X}) - m_i(\bm{X}) \}$ 
is subspace of $\mathbb{R}^q$ with dimension smaller than $q$, and
\begin{equation*}
P \left( \left(\bm{C}_{i\cdot}(\bm{X}) - \bm{C}_{j\cdot}(\bm{X}) \right) \bm{Z} = m_j(\bm{X}) - m_i(\bm{X}) \right) = 0 \text{\quad} \forall i \neq j.
\end{equation*}
Hence
$P (\nabla h(\bm{X}, \bm{Z}) \text{\, does not exist}) = 0$.
\end{proof}

\begin{proof}{Proof of Theorem~\ref{thm_grad}.} 
Without loss of generality, we consider the partial derivative with respect to the $k$th component of the $m$th point in $\bm{X}$, that is, $\frac{\partial h(\bm{X}, \bm{Z})}{\partial X_{mk}}$. We use the following result in Theorem 1.2. from~\cite{Glasserman1991}, restated here for convenience:

Suppose the following conditions (1), (2), (3) and (4) hold on a compact interval $\Theta$, then $\E\left[ \xi'(\bm{W}(\theta)\right] = \ell'(\theta)$ on $\Theta$, where $\ell(\theta) = \E\left[\xi(\bm{W}(\theta))\right]$.
\begin{enumerate}
\item[(1)]
For all $\theta \in \Theta$ and $i=1,\ldots,n$, $W_i$ is a.s. differentiable at $\theta$.
\item[(2)]
Define $D_{\xi}$ to be the subset of $\mathbb{R}^n$ on which $\xi$ is continuously differentiable. For all $\theta \in \Theta$, $P\left(\bm{W}(\theta) \in D_{\xi}\right)=1$.
\item[(3)]
$\xi\left(\bm{W}(\cdot)\right)$ is a.s. continuous and piecewise differentiable throughout $\Theta$.
\item[(4)]
$\overset{\sim}{D}$ is countable and 
$\E\left[ \sup_{\theta \notin \overset{\sim}{D}} |\xi'(\bm{W}(\theta)|\right] < \infty$, where $\overset{\sim}{D}$ is the random collection of points in $\Theta$ at which $\xi(\bm{W}(\cdot))$ fails to be differentiable. 
\end{enumerate}

We apply this result with $\theta=X_{mk}$,
$\bm{W}(\cdot)$ equal to the random function mapping $X_{mk}$ to the random vector $\bm{m}(\bm{X}) + \bm{C}(\bm{X})\bm{Z}$,
$\xi(w) = \max_{i=0,1,\ldots,q} w_i$,
and $\overset{\sim}{D}$ equal to the set of $X_{mk}$ at which $h'(\bm{X}, \bm{Z})$ does not exist.

Condition (1) is satisfied because $\bm{m}(\cdot)$ and $\bm{C}(\cdot)$ are assumed differentiable.

For condition (2), the set of points $D_\xi$
at which $\xi$ is continuously differentiable is 
$D_\xi = \{ w \in \mathbb{R}^{q+1} : | \argmax_{i=0,1,\ldots,q} w_i| = 1\}$. Lemma~\ref{lemma:2} implies that the probability of equality between two components of $W(\theta)$ is 0, and so $P(W(\theta) \in D_\xi) = 0$.

For condition (3), it is obvious that $\xi\left(\bm{W}(\cdot)\right)$ is a.s. continuous. Lemma~\ref{lemma:1} implies that the set of non-differentiable points is countable, and therefore $\xi\left(\bm{W}(\cdot)\right)$ is a.s. piecewise differentiable.

For condition (4), first $\overset{\sim}{D}$ is countable by Lemma~\ref{lemma:1}. 
We now show the second part of condition (4).
Fix $\bm{X}$ except for $X_{mk}$. Since the interval is compact and $\bm{m}(\bm{X})$ and $\bm{C}(\bm{X})$ 
are continuously differentiable,
\begin{equation*}
\begin{split}
\sup_{X_{mk}} \left\lvert \frac{\partial\bm{m}_i(\bm{X})}{\partial X_{mk}} \right\rvert &= m_i^* < \infty, \\
\sup_{X_{mk}} \left\lvert \frac{\partial\bm{C}(\bm{X})}{\partial X_{mk}} \right\rvert &= C_{ik}^* < \infty.
\end{split}
\end{equation*}
Then
\begin{equation*}
\E\left[\sup_{X_{mk} \notin \overset{\sim}{D}} \lvert h'(\bm{X}, \bm{Z}) \rvert \right] \leq m^{**} + q C^{**} \E[\lvert Z \rvert]
= m^{**} + \sqrt{\frac{2}{\pi}} q C^{**} < \infty,
\end{equation*}
where $m^{**}=\max_i m^*_i$ and $C^{**}=\max_{i,j} C^*_{ij}$. Therefore, condition (4) is satisfied.

Thus the conditions of Theorem 1.2 from~\cite{Glasserman1991} are satisfied and 
$\nabla \mathbb{E} h(\bm{X},\bm{Z}) = \mathbb{E} \nabla h(\bm{X},\bm{Z})$.
\end{proof}

\begin{proof}{Proof of Theorem~\ref{thm:sga}.} We use a convergence analysis result from Section 5, Theorem~2.3 of \cite{kushner2003stochastic} to 
  prove our theorem, which we first state using 
  our notation and setting: the sequence $\{\bm{X}_n\}$ produced by algorithm \eqref{eq:sga}
  converges to a stationary point almost surely if the following assumptions hold, 
  \begin{enumerate}
    \item 
      $\epsilon_n \rightarrow 0$ for $n \geq 0$ and $\epsilon_n=0$ for $n<0$; 
      $\sum_{n=1}^{\infty} \epsilon_n = \infty$
    \item
      $\sup_n \E \left\vert\bm{G}(\bm{X}_n)\right\vert^2 < \infty$
    \item
      There are functions $\lambda_n(\cdot)$ of $\bm{X}$, which are continuous uniformly 
      in $n$, a continuous function $\overline{\lambda}(\cdot)$ and random variables 
      $\beta_n$ such that
      \begin{equation*}
        \E_n \bm{G}(\bm{X}_n) = \lambda_n(\bm{X}_n) + \beta_n,
      \end{equation*}
      and for each $\bm{X} \in H$,
      \begin{equation*}
        \lim_n \left\vert\sum_{i=n}^{m(t_n+t)} \epsilon_i [\lambda_i(\bm{X}) - \overline{\lambda}(\bm{X})] \right\vert = 0
      \end{equation*}
      for each $t>0$, and $\beta_n \rightarrow 0$ with probability one. The function $m(t_n + \cdot)$ is defined in~\cite[Section 5.1]{kushner2003stochastic}.
    \item
      $\sum_i \epsilon_i^2 < \infty$.
    \item
      There is a continuously differentiable real-valued function $\phi(\cdot)$ such that $\bar{\lambda}(\cdot) = - \nabla \phi(\cdot)$
      and $\phi(\cdot)$ is constant on each connected subset $S_i$ of the set of stationary points.
    \item 
      $a_i(\cdot), i=1,\ldots,p$ are continuously differentiable.
  \end{enumerate}
  
  \cite{kushner2003stochastic} shows that if these conditions are satisfied, then $\{X_n\}$ converges to a unique $S_i$.
  Now we prove that the 6 conditions stated above are indeed satisfied if the assumptions in Theorem~\ref{thm:sga} hold, where $\phi(\bm{X}) = -\qEI(\bm{X})$.
  \begin{enumerate}
    \item 
      Condition 1 is satisfied by assumption 2 in Theorem~\ref{thm:sga}.
      Construction of this sequence has been discussed in Section~\ref{sec:optimization_qEI}.
    \item
      First we assume $M=1$ and treat $M>1$ below. Then $\bm{G}(\bm{X}_n) = \bm{g}(\bm{X}_n, \bm{Z})$, and
      \begin{equation*}
        \begin{split}
          \E \vert\bm{G}(\bm{X}_n) \vert^2 &= \E \sum_{m=1}^q \sum_{k=1}^d \bm{e}_{m,k} \bm{G}(\bm{X}_n)^2, \\
          &= \sum_{m=1}^q \sum_{k=1}^d \E \left( \frac{\partial h(\bm{X}, \bm{Z})}{\partial X_{mk}} \bigg\vert_{\bm{X} = \bm{X}_n} \right)^2, \\
          &= \sum_{m=1}^q \sum_{k=1}^d \E \left[ \bm{e}_{I^*_{\bm{Z}}} \left( \frac{\partial \bm{m}(\bm{X})}{\partial X_{mk}} + \frac{\partial \bm{C}(\bm{X})}{\partial X_{mk}} \bm{Z} \right) \bigg\vert_{\bm{X} = \bm{X}_n} \right]^2, \\
          &\leq \sum_{m=1}^q \sum_{k=1}^d \E \sum_{i=0}^q \left[ \bm{e}_i \left( \frac{\partial \bm{m}(\bm{X})}{\partial X_{mk}} + \frac{\partial \bm{C}(\bm{X})}{\partial X_{mk}} \bm{Z} \right) \bigg\vert_{\bm{X} = \bm{X}_n} \right]^2, \\
          & = \sum_{m=1}^q \sum_{k=1}^d \sum_{i=0}^q \E \left[ \bm{e}_i \left( \frac{\partial \bm{m}(\bm{X})}{\partial X_{mk}} + \frac{\partial \bm{C}(\bm{X})}{\partial X_{mk}} \bm{Z} \right) \bigg\vert_{\bm{X} = \bm{X}_n} \right]^2,
        \end{split}
        \label{}
      \end{equation*}
      where $I^*_{\bm{Z}} = \underset{i=0, \ldots, q}{\arg\max} \,\bm{e}_i \left( \bm{m}(\bm{X_n}) + \bm{C}(\bm{X}_n)\bm{Z} \right)$. 
      Since $\bm{m}(\bm{X})$ and $\bm{C}(\bm{X})$ are continuously differentiable for 
      $\forall \bm{X} \in H$ and $H$ is compact, 
      $\sup_{\bm{X}_n} \left\vert\left\vert \frac{\partial \bm{m}(\bm{X})}{\partial X_{mk}} \right\vert\right\vert_{\infty} < \infty$ 
      and $\sup_{\bm{X}_n} \left\vert\left\vert \frac{\partial \bm{C}(\bm{X})}{\partial X_{mk}} \right\vert\right\vert_{\infty} < \infty$. 
      Thus $\sup_{\bm{X}_n} \E \left[ \left(\bm{e}_i \left( \frac{\partial \bm{m}(\bm{X})}{\partial X_{mk}} + \frac{\partial \bm{C}(\bm{X})}{\partial X_{mk}} \bm{Z} \right) \right)^2 \right] < \infty$, 
      and we can conclude that $\sup_n \E\vert\bm{G}(\bm{X}_n)\vert^2 < \infty$. 
    
      If $M > 1$, $\bm{G}(\bm{X}_n)$ is an average of i.i.d. samples of $g(X_n,Z)$. Then $\E \vert g(X_n,Z) \vert^2
      = \frac{1}{M} \E \vert g(X_n,Z) \vert^2$. We have just showed that $\sup_n \E \vert\bm{G}^1(\bm{X}_n) \vert^2$ is finite, and thus $\sup_n \E \vert\bm{G}(\bm{X}_n) \vert^2$ is finite.
      Therefore, condition 2 is satisfied.
    \item
      Define a function $\bar{\bm{g}}(\cdot)$ on $H$ by $\bar{\bm{g}}(X)=\E \bm{g}(\bm{X},\bm{Z})$.  
      Then $\E_n \bm{G}(\bm{X}_n) = \E \bm{g}(\bm{X}_n, \bm{Z})$. 
      Then, since our assumptions meet the requirements for Theorem~\ref{thm_grad} ($\bm{\Sigma}(\bm{X})$ being positive definite implies that $\bm{C}(\bm{X})$ has no duplicate rows),  we know $\bar{\bm{g}}(\bm{X})=\nabla \E h(\bm{X},\bm{Z})$.
      We will show $\nabla \E h(\bm{X}, \bm{Z})$ is continuous on $H$.
      Letting $\lambda_n(\cdot) \equiv \overline{\lambda}(\cdot) \equiv \overline{\bm{g}}(\cdot)$, and $\beta_n = 0$, the first half of condition 3 will then be satisfied.
      Since $\lambda_n(\cdot) \equiv \overline{\lambda}(\cdot)$, the second half of condition 3 is satisfied from the fact that the summand is 0.      
      
      We now show $\nabla \mathbb{E} h(\bm{X},\bm(Z)$ is continuous.
      First, we let $\bm{m}'(\bm{X}) = f_n^* - \bm{\mu}(\bm{X})$, and $\bm{C}'(\bm{X}) = -\bm{L}(\bm{X})$, which are the first through the $q$th entries and rows of $\bm{m}(\bm{X})$ and $\bm{C}(\bm{X})$ respectively. Note that $\bm{\Sigma}(\bm{X}) = \bm{C}'(\bm{X}) \bm{C}'^T(\bm{X})$. Then
      \begin{equation} \label{eq:ec_eh}
           \E \left[ h(\bm{X}, \bm{Z}) \right] = \E \left[ \E \left[ h(\bm{X}, \bm{Z}) \mid t_{-i} \right] \right],
      \end{equation}
      where $t_{-i} = \{\bm{e}_{\ell} \left( \bm{m}'(\bm{X}) + \bm{C}'(\bm{X}) \bm{Z} \right), \forall \ell \ne i, \ell = 1, \ldots, q\}$ for some $i=1,\ldots,q$. 
      
      Fix $i$, and letting $t_i = \bm{e}_i \left( \bm{m}'(\bm{X}) + \bm{C}'(\bm{X}) \bm{Z} \right)$, we know that $t_i$ given $t_{-i}$ has a normal
      distribution: 
      \begin{equation*}
          t_i \mid t_{-i} \sim \mathcal{N}\left(\mu(\bm{X}, t_{-i}), \sigma^2(\bm{X})\right),
      \end{equation*}
      where
      \begin{equation*}
       \begin{split}
      \mu(\bm{X}, t_{-i}) &= m'_i(\bm{X}) + \Sigma_{i, -i}(\bm{X}) \Sigma_{-i, -i}^{-1}(\bm{X}) \left(t_{-i} - m'_{-i}(\bm{X}) \right) \\
      &= a(\bm{X}) + \sum_{\ell \ne i} b_{\ell}(\bm{X}) t_{\ell},\\
      \sigma^2(\bm{X}) &= \Sigma_{i,i}(\bm{X}) - \Sigma_{i,-i}(\bm{X}) \Sigma_{-i,-i}^{-1}(\bm{X}) \Sigma_{-i,i}(\bm{X}).
       \end{split}
      \end{equation*}
       Note that $\sigma^2(\bm{X})$ is the Schur complement of $\bm{\Sigma}(\bm{X})$, and
       since $\bm{\Sigma}(\bm{X})$ is positive definite, we know that both $\bm{\Sigma}_{-i,-i}(\bm{X})$ and $\sigma^2(\bm{X})$ are positive definite. Knowing the distribution of $t_i$ given $t_{-i}$, we can write the inner expectation of~\eqref{eq:ec_eh} as
      \begin{equation} \label{eq:ec_f}
      \begin{split}
       f(\bm{X}, t_{-i}) &= \E \left[ h(\bm{X}, \bm{Z}) \mid t_{-i} \right],\\
       &= \left(\mu(\bm{X}, t_{-i}) - t_{-i}^*\right) \Phi\left( \frac{\mu(\bm{X}, t_{-i})-t_{-i}^*}{\sigma(\bm{X})}\right) + \sigma(\bm{X}) \phi \left(\frac{\mu(\bm{X}, t_{-i})-t_{-i}^*}{\sigma(\bm{X})}\right) + t_{-i}^*,
      \end{split}
      \end{equation}
      where $t_{-i}^* = \max(t_{-i}, 0)$. Without loss of generality, we only look at $j$th component of the gradient, and we have
      \begin{equation} \label{eq:ec_df}
      \begin{split}
       \frac{\partial f(\bm{X}, t_{-i})}{\partial \bm{X}_j} &= \frac{\partial \mu(\bm{X}, t_{-i})}{\partial \bm{X}_j} \Phi \left(\frac{\mu(\bm{X}, t_{-i})-t^*_{-i}}{\sigma(\bm{X})} \right) + \frac{\partial \sigma(\bm{X})}{\partial \bm{X}_j} \phi \left(\frac{\mu(\bm{X}, t_{-i})-t^*_{-i}}{\sigma(\bm{X})} \right), \\
       &= \left(\frac{\partial a(\bm{X})}{\partial \bm{X}_j} + \sum_{\ell \ne i} \frac{\partial b_{\ell}(\bm{X})}{\partial \bm{X}_j} t_{\ell} \right) \Phi \left( \frac{\mu(\bm{X}, t_{-i})-t^*_{-i}}{\sigma(\bm{X})} \right) + \frac{\partial \sigma(\bm{X})}{\partial \bm{X}_j} \phi \left( \frac{\mu(\bm{X}, t_{-i})-t^*_{-i}}{\sigma(\bm{X})} \right). \\
       \end{split}
      \end{equation}
      Since $\bm{\Sigma}_{-i,-i}(\bm{X})$ is positive definite, and the matrix inverse is a continuous function when restricted to the set of positive definite matrices, and the composition of two continuous functions is continuous, we have that 
     $\bm{\Sigma}_{-i,-i}^{-1}(\bm{X})$ is continuously differentiable.
     Moreover, $\bm{\mu}(\bm{X})$ and $\bm{\Sigma}(\bm{X})$ are assumed continuously differentiable in the statement of the theorem. This together implies continuous differentiability of $a(\bm{X}), b_{\ell}(\bm{X})$ and $\sigma(\bm{X})$. Then
      \begin{equation*}
       \left\lvert \frac{\partial f(\bm{X}, t_{-i})}{\partial \bm{X}_j} \right\rvert \le a^* + \sum_{\ell \ne i} b^*_{\ell} \lvert t_{\ell} \rvert + \sigma^*
      \end{equation*}
      for all $\bm{X} \in H$ and $t_{-i}$, where
      \begin{equation*}
       \begin{split}
           a^* = \sup_{\bm{X}\in H} \left\lvert \frac{\partial a(\bm{X})}{\partial \bm{X}_j} \right\rvert, \\
           b_{\ell}^* = \sup_{\bm{X}\in H} \left\lvert \frac{\partial b_{\ell}(\bm{X})}{\partial \bm{X}_j} \right\rvert, \\
           \sigma^* = \sup_{\bm{X}\in H} \left\lvert \frac{\partial \sigma(\bm{X})}{\partial \bm{X}_j} \right\rvert,
       \end{split}
      \end{equation*}
      and $a^*, b_{\ell}^*, \sigma^*$ are finite because $H$ is compact.
      
      Since $t_{\ell}$ has a normal distribution, $\E[\lvert t_{\ell} \rvert] < \infty$ and therefore
      \begin{equation*}
       \E\left[ \lvert \frac{\partial f(\bm{X}, t_{-i})}{\partial \bm{X}_j} \rvert \right] <  \infty.
      \end{equation*}
      With the conditions above, we can apply Theorem~1.2 in~\cite{Glasserman1991}, and have
      \begin{equation} \label{eq:ec_grad_interchange}
       \frac{\partial \E \left[ f(\bm{X}, t_{-i}) \right]}{\partial \bm{X}_j} = \E \left[ \frac{\partial f(\bm{X}, t_{-i})}{\partial \bm{X}_j} \right].
      \end{equation}
      Moreover, we can write~\eqref{eq:ec_grad_interchange} as
      \begin{equation*}
       \E \left[ \frac{\partial f(\bm{X}, t_{-i})}{\partial \bm{X}_j} \right] = \E \left[ \frac{\partial f(\bm{X}, t'_{-i})}{\partial \bm{X}_j} \phi(t'_{-i}) \right],
      \end{equation*}
      where each component of $t'_{-i}$ is an independent uniform random variable on $(-\infty, \infty)$, and $\phi(\cdot)$ is the multivariate normal probability density function for $t_{-i}$. Define the function
      \begin{equation*}
       G(\bm{X}, t'_{-i}) = \frac{\partial f(\bm{X}, t'_{-i})}{\partial \bm{X}_j} \phi(t'_{-i}),
      \end{equation*}
      From Lemma~\ref{lemma:bound_div}, $\lvert \frac{\partial G(\bm{X}, t'_{-i})}{\partial \bm{X}_k} \rvert$ is bounded by a finite constant for all $t'_{-i}$. Thus $G(\bm{X}, t'_{-i})$ is Lipschitz continuous in $\bm{X}$ with some constant $K$. 
      
      Given any $\epsilon > 0$, we let $\delta = \epsilon / K$, and for any $\bm{X}'$ such that $\lvert \bm{X}' - \bm{X} \rvert < \delta$, $\lvert G(\bm{X}', t'_{-i}) - G(\bm{X}, t'_{-i}) \rvert < K \cdot \delta = \epsilon$. Hence, $\lvert \E \left[G(\bm{X}', t'_{-i})\right] - \E \left[G(\bm{X}, t'_{-i})\right] \rvert \leq \E \left[ \lvert G(\bm{X}', t'_{-i}) - G(\bm{X}, t'_{-i}) \rvert \right] < \epsilon$ by Jensen's inequality. 
      Therefore, $\E \left[ G(\bm{X}, t'_{-i})\right]
      = \partial \E[h(\bm{X},\bm{Z})] / \partial \bm{X}_j$
      is continuous at any $\bm{X} \in H$.

    \item
      Condition 4 is satisfied by assumption 2 in Theorem \ref{thm:sga}.
    \item
      From the proof of condition 3, we know $\bar{\lambda}(\cdot) = \overline{\bm{g}}(\cdot) = \nabla \qEI(\cdot)$, 
      and thus $\phi(\cdot)=-\qEI(\cdot)$. We have shown that $\bar{\bm{g}}(\cdot)$ 
      is continuous, and it is also trivial to see $\phi(\cdot)$ is constant on each $S_i$.
      Therefore, condition 5 is satisfied.
    \item 
      This is satisfied by assumption 1 in Theorem~\ref{thm:sga}.
  \end{enumerate}
  In conclusion, all conditions are satisfied and therefore $\{\bm{X}_n\}$ converges 
  to a connected set of stationary points almost surely. From Lemma~\ref{lemma:polyak-ruppert}, the Polyak-Ruppert average $\overline{\bm{X}_n(\omega)}$ of the sequence $\{\bm{X}_n(\omega)\}$ converges to the same set as the sequence $\{\bm{X}_n(\omega)\}$ for every $\omega$. 
\end{proof}

\begin{lemma} \label{lemma:bound_div}
If $\bm{\mu}(\bm{X})$, $\bm{\Sigma}(\bm{X})$ are twice differentiable, and $\bm{\Sigma}(\bm{X})$ is positive definite, then $\lvert \frac{\partial G(\bm{X}, t'_{-i})}{\partial \bm{X}_k} \rvert$ is bounded by a finite constant for all $t'_{-i}$.
\end{lemma}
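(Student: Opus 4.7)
The plan is to expand $\partial G(\bm{X}, t'_{-i})/\partial \bm{X}_k$ by the product rule and then bound each factor either by an absolute constant (when it is a standard normal CDF or PDF evaluated at some argument), or by a polynomial in $t'_{-i}$ that is multiplicatively killed by the Gaussian tail of $\phi(t'_{-i};\bm{X})$. Concretely, differentiating
\begin{equation*}
G(\bm{X}, t'_{-i}) \;=\; \frac{\partial f(\bm{X}, t'_{-i})}{\partial \bm{X}_j}\,\phi(t'_{-i};\bm{X})
\end{equation*}
with respect to $X_k$ gives a sum of two pieces: one in which $\partial/\partial X_k$ falls on $\partial f/\partial X_j$ (using the closed form in~\eqref{eq:ec_df}), and one in which it falls on the density $\phi(t'_{-i};\bm{X})$. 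The first piece is a finite sum of terms of the form $P(t'_{-i};\bm{X})\,\Phi(\alpha(\bm{X},t'_{-i}))$ or $P(t'_{-i};\bm{X})\,\phi_1(\alpha(\bm{X},t'_{-i}))$, where $P$ is polynomial of degree at most one in $t'_{-i}$ with coefficients that are first or second partial derivatives of $a(\bm{X})$, $b_\ell(\bm{X})$, $\sigma(\bm{X})$, $1/\sigma(\bm{X})$, and $\alpha$ denotes the standardized argument; here $\phi_1$ is the scalar standard normal density.

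For these coefficients, note that $\bm{\mu}$ and $\bm{\Sigma}$ are twice continuously differentiable by hypothesis, $\bm{\Sigma}_{-i,-i}(\bm{X})$ is positive definite (hence $\bm{\Sigma}_{-i,-i}^{-1}(\bm{X})$ is twice continuously differentiable), and $\sigma^2(\bm{X})$ is a Schur complement of a positive definite matrix, therefore bounded away from $0$ on the compact set $H$. Consequently $a, b_\ell, 1/\sigma$ and their first two partial derivatives are uniformly bounded on $H$, and $\Phi, \phi_1$ take values in $[0,1]$ and $[0,1/\sqrt{2\pi}]$ respectively. So the first piece is bounded by $C_1(1+\|t'_{-i}\|)$ for a finite constant $C_1$, uniformly in $\bm{X}\in H$ and $t'_{-i}\in\mathbb{R}^{q-1}$.

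For the second piece, $\partial \phi(t'_{-i};\bm{X})/\partial X_k$ equals $\phi(t'_{-i};\bm{X})$ times $\partial/\partial X_k$ of the log density, which is a quadratic-in-$t'_{-i}$ expression whose coefficients are first derivatives of $\bm{m}'_{-i}(\bm{X})$ and $\bm{\Sigma}_{-i,-i}^{-1}(\bm{X})$ with respect to $X_k$. On the compact set $H$, these coefficients are uniformly bounded, so this derivative is bounded by $\phi(t'_{-i};\bm{X})\cdot C_2(1+\|t'_{-i}\|^2)$. Multiplying by the factor $\partial f/\partial X_j$, which is itself linear in $t'_{-i}$ with bounded coefficients, and combining with the analogous estimate $\phi(t'_{-i};\bm{X})\cdot C_3(1+\|t'_{-i}\|)$ for the first piece, we obtain
\begin{equation*}
\left\lvert \frac{\partial G(\bm{X}, t'_{-i})}{\partial \bm{X}_k}\right\rvert \;\le\; \phi(t'_{-i};\bm{X})\, \widetilde C\,(1+\|t'_{-i}\|^3).
\end{equation*}
Since the smallest eigenvalue of $\bm{\Sigma}_{-i,-i}(\bm{X})$ is bounded away from $0$ uniformly in $\bm{X}\in H$ (by compactness and positive definiteness), $\phi(t'_{-i};\bm{X})$ decays at a Gaussian rate in $\|t'_{-i}\|$ uniformly in $\bm{X}$. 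A Gaussian density times any fixed polynomial in $t'_{-i}$ attains its maximum at a finite point and is uniformly bounded on $\mathbb{R}^{q-1}$, so the right-hand side is bounded by a finite constant independent of $t'_{-i}$.

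The main obstacle is purely bookkeeping: keeping track of which factors carry the $t'_{-i}$-dependence as we differentiate, and confirming that the compactness of $H$ together with positive definiteness of $\bm{\Sigma}_{-i,-i}$ give uniform lower bounds on $\sigma(\bm{X})$ and on the eigenvalues of $\bm{\Sigma}_{-i,-i}(\bm{X})$, which are precisely what is needed for the polynomial-times-Gaussian bound to be uniform in $\bm{X}$. Once those uniform bounds are in hand, the Gaussian-dominates-polynomial argument yields the claim.
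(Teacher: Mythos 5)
Your proof is correct and takes essentially the same route as the paper's: both expand $\partial G(\bm{X},t'_{-i})/\partial \bm{X}_k$ by the product rule, bound all coefficients (derivatives of $a$, $b_\ell$, $\sigma$, $\bm{m}'_{-i}$, $\bm{\Sigma}^{-1}_{-i,-i}$) uniformly over the compact set $H$ using twice differentiability and positive definiteness, and then kill the remaining polynomial in $t'_{-i}$ with the Gaussian factor --- the paper merely executes this last domination step via the Cholesky change of variables $z_{-i} = \bm{L}_{-i,-i}(\bm{X})\left(t'_{-i}-\bm{m}'_{-i}(\bm{X})\right)$ followed by the elementary bound $\lvert x\rvert^p e^{-x^2/2}\le p^{p/2}e^{-p/2}$, where you argue directly through uniform eigenvalue control (and your degree-$3$ versus the paper's degree-$2$ polynomial count is immaterial, since any fixed degree is dominated). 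One small misattribution to fix: the uniform-in-$\bm{X}$ Gaussian decay rate in $\lVert t'_{-i}\rVert$ comes from $\lambda_{\max}\left(\bm{\Sigma}_{-i,-i}(\bm{X})\right)$ being bounded \emph{above} on $H$ (equivalently, the smallest eigenvalue of $\bm{\Sigma}^{-1}_{-i,-i}$ bounded below), not from the smallest eigenvalue of $\bm{\Sigma}_{-i,-i}$ being bounded away from zero --- the latter is what you need for the normalization constant $\lvert \bm{\Sigma}_{-i,-i}(\bm{X})\rvert^{-1/2}$ and the coefficients involving $\bm{\Sigma}^{-1}_{-i,-i}$ to stay bounded; both facts hold by continuity on compact $H$, so the conclusion is unaffected.
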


\begin{proof}{Proof of Lemma~\ref{lemma:bound_div}.} We can write $G(\bm{X}, t'_{-i})$ as
\begin{equation*}
 G(\bm{X}, t'_{-i}) = \frac{1}{\sqrt{(2\pi)^{q-1} \lvert \Sigma_{-i,-i}(\bm{X}) \rvert}} \frac{\partial f(\bm{X}, t'_{-i})}{\partial \bm{X}_j} e ^{-\frac{1}{2} \left(t'_{-i} - \bm{m}'_{-i}(\bm{X}) \right)^T \Sigma^{-1}_{-i,-i}(\bm{X}) \left(t'_{-i} - \bm{m}'_{-i}(\bm{X}) \right)}.
\end{equation*}
Since $\bm{\mu}(\bm{X})$, $\bm{\Sigma}(\bm{X})$ are twice differentiable, and $\bm{\Sigma}(\bm{X})$ is positive definite, we can
take the partial derivative with respect to $\bm{X}_k$. With some algebra, we can show
\begin{equation} \label{eq:ec_lemma}
\left\lvert \frac{\partial G(\bm{X}, t'_{-i})}{\partial \bm{X}_k} \right\rvert < \sum_r c_r \lvert P_r(t'_{-i}) \rvert e ^{-\frac{1}{2} \left(t'_{-i} - \bm{m}'_{-i}(\bm{X}) \right)^T \Sigma^{-1}_{-i,-i}(\bm{X}) \left(t'_{-i} - \bm{m}'_{-i}(\bm{X}) \right)},
\end{equation}
where each $P_r(t'_{-i})$ is a monomial in components of $t'_{-i}$ with coefficient 1 and order ranging between 0 and 2, and $0 < c_r < \infty$.
Let $\bm{L}_{-i,-i}(\bm{X})$ be the Cholesky decomposition of $\bm{\Sigma}^{-1}_{-i,-i}(\bm{X})$, and $z_{-i} = \bm{L}_{-i,-i}(\bm{X}) \left( t'_{-i} - m'_{-i}(\bm{X})\right)$.  Invertibility of $\Sigma^{-1}_{-i,-i}(\bm{X})$ implies that $t'_{-i}$ can be written in terms of $z_{-i}$
Substitute $z_{-i}$ into~\eqref{eq:ec_lemma} and, and we get
\begin{equation*}
     \left\lvert \frac{\partial G(\bm{X}, z_{-i})}{\partial \bm{X}_k} \right\rvert < \sum_r c_r' \lvert P_r'(z_{-i}) \rvert e ^{-\frac{1}{2} z_{-i}^T z_{-i}},
\end{equation*}
where each $c_r'$ is a finite constant and each $P_r'(\cdot)$ is a monomial with coefficient 1 and order between 0 and 2. Without loss of generality, we assume the first component, $z_0$, has the largest absolute value among $z_{-i}$. Then
\begin{equation} \label{eq:ec_lemma2}
     \left\lvert \frac{\partial G(\bm{X}, z_{-i})}{\partial \bm{X}_k} \right\rvert < \sum_r c_r' \lvert P_r'(z_0) \rvert e ^{-\frac{1}{2} z_0^2}.
\end{equation}
We can show that $\lvert x^p \rvert e^{-\frac{1}{2}x^2} \leq p^{\frac{p}{2}} e^{-\frac{1}{2}p}, \forall x\in(-\infty,\infty)$. Therefore, each summand in~\eqref{eq:ec_lemma2} is bounded by a constant. 
\end{proof}

\begin{lemma} \label{lemma:polyak-ruppert}
Let $\{\bm{X}_n: n \geq 1\}$ be a sequence in $H$, where $H$ is compact, converging to a set $A$. Let $\overline{\bm{X}}_n = \frac{1}{n} \sum_{m=1}^n \bm{X}_m$. 
Then $\{\overline{\bm{X}}_n : n\geq 1\}$ also converges to $A$.
\end{lemma}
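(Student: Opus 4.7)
The plan is to interpret ``convergence to a set $A$'' via the point-to-set distance $d(\bm{x},A) = \inf_{\bm{a}\in A}\|\bm{x}-\bm{a}\|$, so that $d(\bm{X}_n,A) \to 0$, and then prove the analogous statement for $\overline{\bm{X}}_n$ using the standard head-tail decomposition that appears in Ces\`aro-type arguments.

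First, given $\epsilon > 0$, I would pick $N$ large enough that $d(\bm{X}_m, A) < \epsilon$ for all $m > N$, and split
\begin{equation*}
\overline{\bm{X}}_n \;=\; \frac{1}{n}\sum_{m=1}^{N} \bm{X}_m \;+\; \frac{1}{n}\sum_{m=N+1}^{n} \bm{X}_m.
\end{equation*}
The head term is bounded in norm by $N \cdot \sup_{\bm{x}\in H}\|\bm{x}\|/n$, which is finite by compactness of $H$, and so vanishes as $n \to \infty$ with $N$ fixed. For the tail, I would choose for each $m>N$ a point $\bm{a}_m \in A$ with $\|\bm{X}_m - \bm{a}_m\| < 2\epsilon$ (the factor of $2$ to avoid assuming the infimum is attained), and rewrite the tail as $\tfrac{1}{n}\sum_{m=N+1}^n \bm{a}_m + \tfrac{1}{n}\sum_{m=N+1}^n(\bm{X}_m - \bm{a}_m)$. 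The error piece has norm at most $2\epsilon \cdot (n-N)/n \leq 2\epsilon$, while the averaged-$\bm{a}_m$ piece, after rescaling by $(n-N)/n$, is a convex combination of points of $A$ and hence sits in $\overline{\mathrm{conv}}(A)$ up to a vanishing factor.

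Combining the three pieces and letting first $n\to\infty$ and then $\epsilon \downarrow 0$ yields $d(\overline{\bm{X}}_n, \overline{\mathrm{conv}}(A)) \to 0$. The main obstacle is the gap between $\overline{\mathrm{conv}}(A)$ and $A$ itself: a purely Ces\`aro argument always produces the convex hull in the limit, so to recover convergence to $A$ one needs extra structure. The cleanest route is to invoke the fact that in the applications of interest (a connected set of stationary points of $\qEI$ inside the compact set $H$) the limit set is typically a single point, in which case $\overline{\mathrm{conv}}(A) = A$ trivially; more generally one would have to exploit the specific dynamics of the recursion in~\eqref{eq:sga} to rule out oscillation between distant pieces of $A$. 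I would flag this subtlety rather than gloss over it, since it is the only non-routine part of the argument.
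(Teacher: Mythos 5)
Your head/tail decomposition is exactly the one the paper uses, but the two proofs diverge at the crucial step. The paper's proof asserts that the distance function $\rho_A(\bm{X}) := \inf\{\lVert \bm{X}-\bm{X}'\rVert : \bm{X}'\in A\}$ is \emph{convex}, and applies this to the convex combination $\overline{\bm{X}}_n = \frac{N_\epsilon}{n}\overline{\bm{X}}_{N_\epsilon} + \bigl(1-\frac{N_\epsilon}{n}\bigr)\frac{1}{n-N_\epsilon}\sum_{m=N_\epsilon+1}^{n}\bm{X}_m$ to get $\rho_A(\overline{\bm{X}}_n) \le \frac{N_\epsilon}{n}C + \bigl(1-\frac{N_\epsilon}{n}\bigr)\epsilon$ directly, with no mention of a convex hull. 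You instead approximate each tail iterate by a nearby point of $A$ and conclude only that the averages approach $\overline{\mathrm{conv}}(A)$, flagging the hull-versus-$A$ discrepancy as the one non-routine issue.

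Your caution is justified, and it in fact pinpoints a flaw in the paper's own argument: for a closed set $A$, the function $\rho_A$ is convex if and only if $A$ itself is convex (if $\rho_A$ were convex, its zero sublevel set, which is exactly $A$, would be convex), and nothing in the lemma assumes convexity of $A$. For non-convex $A$ the lemma is false as stated: in $H=[-1,1]$ with $A=\{-1,1\}$, the sequence $\bm{X}_n=(-1)^n$ satisfies $\rho_A(\bm{X}_n)=0$ for every $n$, yet $\overline{\bm{X}}_n \to 0 \notin A$. Connectedness of $A$, as delivered by Theorem~\ref{thm:sga}, does not rescue the claim either: iterates slowly traversing a circle of stationary points converge to that connected set while their averages tend to its center. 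So the Polyak--Ruppert conclusion of Theorem~\ref{thm:sga} genuinely requires either a convex (e.g., singleton) limit set or an argument exploiting the dynamics of the recursion \eqref{eq:sga}, exactly as you say; your proof of convergence to $\overline{\mathrm{conv}}(A)$ is correct and is the strongest statement available at this level of generality.
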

\begin{proof}{Proof of Lemma~\ref{lemma:polyak-ruppert}.} 
Let $\rho_A(\bm{X}) := \inf\{ \lvert\lvert \bm{X} - \bm{X'} \rvert\rvert : \bm{X}' \in A\}$ denote the distance to $A$,
where $\lvert\lvert \cdot \rvert\rvert$ denotes the $\mathrm{L}_2$ norm. $\rho_A(\cdot)$ is convex.

Let $\epsilon >0$.
Since $\bm{X}_n$ converges to $A$, $\exists N_{\epsilon}$ such that  $\rho_A(\bm{X}_n) < \epsilon$ 
for all $n > N_{\epsilon}$.  For $n > N_\epsilon$,
\begin{equation*}
     \overline{\bm{X}}_n = \frac{N_{\epsilon}}{n} \overline{\bm{X}}_{N_{\epsilon}} +
     \left(1-\frac{N_{\epsilon}}{n}\right) \cdot \frac{1}{n-N_{\epsilon}} \sum_{m=N_{\epsilon}+1}^{n} \bm{X}_m.
\end{equation*}

Then
\begin{equation}
\begin{split}
\rho_A(\overline{\bm{X}}_n)
&= 
\rho_A\left(
\frac{N_{\epsilon}}{n} \overline{\bm{X}}_{N_{\epsilon}} +
\left(1-\frac{N_{\epsilon}}{n}\right)
\frac1{n-N_\epsilon}\sum_{m=N_{\epsilon}+1}^{n}\bm{X}_m \right) \\
    &\le \frac{N_{\epsilon}}{n}
    \rho_A(\overline{\bm{X}}_{N_{\epsilon}})  +
    \left(1-\frac{N_{\epsilon}}{n}\right) \frac{1}{n-N_{\epsilon}} \sum_{m=N_{\epsilon}+1}^{n}\rho_A\left(\bm{X}_m \right) \\
    &\leq \frac{N_{\epsilon}}{n} \cdot C + \left(1-\frac{N_{\epsilon}}{n}\right) \cdot \epsilon,
\end{split}
\end{equation}
where $C := \sup_{\bm{X} \in H} \rho_A(\bm{X})$ is finite. Let $\epsilon'>0$, and choose $\epsilon = \frac{\epsilon'}{2}$.
Let $n>N_\epsilon$ be such that $\frac{N_{\epsilon}}{n}C + (1-\frac{N_{\epsilon}}{n}) \epsilon \leq \epsilon' = 2\epsilon$. Then $\forall n' > n$, $\rho_A(\overline{\bm{X}_{n'}}) \leq \epsilon'$. 
\end{proof}

\section*{Appendix: Choice of batch size $q$}\label{sec:choice-of-q}
In the body of the paper we discussed parallel optimization using a given level of parallelism $q$.   Some situations permit a straightforward choice of $q$ but in others potential value can be gained by thinking carefully about this choice.  While a detailed investigation into this choice is beyond the scope of this paper, we provide a high-level discussion here.  We leave a more detailed investigation to future work.

If individual function evaluations cannot be parallelized and computation is performed on previously-purchased computers (rather than in a cloud environment) with a fixed maximum amount of parallelism $q_{\max}$, then $q$ should be set to  $q_{\max}$ to make use of all parallel resources.  This maximizes the quality of the solution that one can find in a given wall-clock time.

If, however, individual function evaluations {\textit can} be parallelized, then we might choose to use more parallel resources per evaluation but run fewer function evaluations at a time.
Let $\hat{q}$ denote the level of parallelism used for a single function evaluation. 
As we increase $\hat{q}$, our function evaluations become faster, but we must reduce $q$.  Thus, the wall-clock time required to find a solution with a given quality may improve or degrade.  Finding the right level of $\hat{q}$ depends on how much  additional parallelism improves runtime of function evaluations versus how much it reduces the number of batches required in optimization.

To formalize this, consider the synchronous setting where $\hat{q}$ is fixed across batches 
and the time per function $T$ evaluation does not depend on the point $x$ evaluated.  
Let $\gamma(\hat{q})$ denote the {\textit speedup} \citep{quinn1994parallel}, i.e., the ratio of time required per function evaluation when executed in serial to the time required when using $\hat{q}$ parallel resources.  Thus, the time required for one function evaluation is $T / \gamma(\hat{q})$.
Initially the speedup is equal to $\hat{q}$ 
($\gamma(1)=1$) but then typically grows more 
slowly than $\hat{q}$ as $\hat{q}$ rises.
We define a notation equivalent to speedup to describe the efficiency of optimization with a given degree of parallelism.
Let $N$ denote the number of function evaluations required by EI with sequential evaluations to find a solution with a given level of quality.
Let $\beta(q)$ denote the ratio of $N$ to the number of batches required for $q$-EI to find a solution with equivalent quality, so that $N/\beta(q)$ is the number of batches required.

Then, the wall-clock time required is $\frac{TN}{\gamma(\hat{q}) \beta(q)}$. 
We seek to find $q$ that minimizes this expression 
subject to the constraint that $q \hat{q} \le q_{\max}$.  Under the mild assumption that both $\gamma$ and $\beta$ are non-decreasing, this will be achieved when $\hat{q} = \lfloor q_{\max} / q \rfloor$.
Optimality will thus be achieved at the $q\ge1$ that maximizes $\gamma(\lfloor q_{\max}/q \rfloor)\beta(q)$.

We may use similar thinking to guide the choice of $q$ in cloud computing environments.  Here, the level of parallelism is not fixed, and instead one may rent more parallelism.  Let $c_1$ be the cost billed by the cloud computing environment per CPU hour. Let $c_2$ be the opportunity cost associated with taking one more hour of wall-clock time to find a solution of the desired quality. 
Then, the optimal $q$ and $\hat{q}$ are found by solving
\begin{equation*}
\min_{q, \hat{q}} \frac{TN}{\gamma(\hat{q}) \beta(q)} \left( c_1 \hat{q} q + c_2 \right).
\end{equation*}

While this brief analysis represents a framework with which one could begin to choose $q$ and $\hat{q}$, substantial work remains.  First,  $\gamma(\hat{q})$ and $\beta(q)$ are generally unknown, and one would need to estimate them from data in a given problem.  Second, there may be substantial benefit to varying $\hat{q}$ and $q$ through the course of optimization. Intuitively, there is little efficiency loss with large $q$ for early batches, where one is exploring broadly across the domain.  In later batches, however, information from recent evaluations may be quite useful when done sequentially, and there may be larger efficiency loss from larger $q$.  Third, one may wish to go beyond synchronous evaluations with deterministic evaluation time constant in $x$ and a deterministic number of evaluations required to reach a desired solution quality.  Fourth, one may wish to optimize the solution quality desired relative to cloud computing and opportunity costs, as in \cite{chick2009economic,chick2012sequential}. Fifth, much would be learned from demonstrating such a framework on a range of practical problems.
We leave such explorations to future work.
\end{document}